\documentclass{article}

% Recommended, but optional, packages for figures and better typesetting:
\usepackage{microtype}
\usepackage{graphicx}
\usepackage{subfigure}
\usepackage{booktabs} % for professional tables

\usepackage{algorithm}
\usepackage{algorithmic}
\usepackage[utf8]{inputenc}
\usepackage{amsmath}
\usepackage{amsthm}
\usepackage{color}
\usepackage{mathrsfs}
\usepackage{todonotes,enumitem,rotating,multirow}
\newtheorem{theorem}{Theorem}
\newtheorem{proposition}{Proposition}
\newtheorem{lemma}{Lemma}
\newtheorem{definition}{Definition}
\newtheorem{hyp}{Hypothesis}

\DeclareMathOperator*{\argmax}{argmax}

\def\R{\mathbf{R}}

\def\calX{\mathcal{X}}
\def\calY{\mathcal{Y}}
\def\calZ{\mathcal{Z}}
\def\E{\mathbf{E}}

\def\OTc{\mathrm{OT}_c}

\usepackage{hyperref}

% Attempt to make hyperref and algorithmic work together better:

\usepackage{authblk}

\title{On the Existence of Optimal Transport Gradient for Learning Generative Models}

\author[1]{Antoine Houdard}
\author[1]{Arthur Leclaire}
\author[1]{Nicolas Papadakis}
\author[2]{Julien Rabin}
\affil[1]{Univ. Bordeaux, CNRS, IMB, UMR 5251, France}
\affil[2]{Normandie Univ., UniCaen, ENSICAEN, CNRS, GREYC, UMR 6072, France}
\date{}                     %% if you don't need date to appear
\setcounter{Maxaffil}{0}

\begin{document}

\maketitle

\begin{abstract}

The use of optimal transport cost for learning generative models has become popular with Wasserstein Generative Adversarial Networks (WGAN). Training of WGAN relies on a theoretical background: the calculation of the gradient of the optimal transport cost with respect to the generative model parameters. 
We first demonstrate that such gradient may not be defined, which can result in numerical instabilities during gradient-based optimization.
We address this issue by stating a valid differentiation theorem in the case of entropic regularized transport and specify conditions under which existence is ensured. 
By exploiting the discrete nature of empirical data, we formulate the gradient in a semi-discrete setting and propose an algorithm for the optimization of the generative model parameters.
Finally, we illustrate numerically the advantage of the proposed framework.

\end{abstract}

\section{Introduction}

Generative models are efficient tools to synthesize plausible samples that look similar to a given data distribution.  With the emergence of deep neural networks,  generative models such as Variational AutoEncoders~\cite{Kingma2014} or Generative Adversarial Networks (GAN)~\cite{goodfellow2014gan} now provide state-of-the-art results for most of machine learning methods dedicated to restoration and edition of signal, image and video data.

\paragraph{Wasserstein GAN.}
A popular instance of the GAN framework is given by 
Wasserstein GAN, or WGAN \cite{arjovsky2017wgan}. The  generative model of a WGAN is trained to provide a synthetic distribution that is close to a given data distribution with respect to an  optimal transport cost, e.g. the $1$-Wassertein distance as in \cite{arjovsky2017wgan}.

Several improvements and extensions of the original WGAN framework have been proposed in the literature. Most works have taken advantage of the particular  case of $1$-Wassersein distance.
Following Rubinstein-Kantorovitch duality, the $1$-Wassertein distance can be approximated using a $1$-Lipschitz discriminator network. The weight clipping strategy, originally  suggested in \cite{arjovsky2017wgan} to obtain a Lipschitz network, leads to convergence issues when training a WGAN.
Many technical improvements have therefore been proposed to both enforce the Lipschitz constraint of the discriminator network and stabilize the training~\cite{gulrajani2017improved,WGANLP, miyato2018spectral}. %This constraint is generally enforced with a regularization term on the gradient of the dual potential\NP{potentiel dual pas introduit}. 

Other extensions come from the generalization to %other optimal transport costs such as \NP{On n'a pas parlé de cout avant/melange cout/distance},
$p$-Wasserstein distances or regularized optimal transport costs. Among these extensions, the method of \cite{liu2018two} considers generic convex costs for optimal transport and relies on low dimentional discrete transport problems on batches during  the learning. In  \cite{korotin2019wasserstein}, the case of the 2-Wasserstein distance is tackled thanks to input convex neural networks \cite{amos2017input} and a cycle-consistency regularization. In order to have a differentiable distance, the use of entropic regularization of optimal transport has been proposed in different ways. The Sinkhorn algorihm \cite{cuturi2013sinkhorn} is plugged to compute regularized optimal transport between minibatches in \cite{genevay2018learning}, while a regularized WGAN loss function is considered in ~\cite{sanjabi2018convergence}. The  semi-discrete formulation of optimal transport has also been exploited to propose  generative models based on 2-Wasserstein distance~\cite{houdard2020wasserstein} or strictly convex costs \cite{Chen2019}.

\paragraph{Differentiation of optimal transport in WGAN training.} During the training of all the aforementioned WGAN methods, an optimal transport cost is minimized with respect to the generator parameters.
Using Fenchel's duality, the parameter estimation problem is reformulated as a min-max problem.  Parameter estimation is then performed with an alternating procedure that requires the gradient expression of the optimal transport cost.  
The computation of such a gradient involves the differentiation of a maximum and the use of an envelop theorem. 
{\em However, we argue that this envelop theorem may not stand in the general case.}
More importantly, failure cases can occur even under the theoretical assumptions made in~\cite{arjovsky2017wgan}.

In this work, we therefore propose an in-depth analysis of the gradient computation for optimal transport through the study of failure cases and their practical consequences. We  demonstrate that a stronger envelop theorem holds in the case of entropic regularization of optimal transport.
Finally, exploiting the discrete nature of the training data (corresponding to most practical cases), we single out the semi-discrete setting of optimal transport. 
In this setting, we derive an algorithm for the optimization of the generative model parameters.
Contrary to previous works as~\cite{sanjabi2018convergence}, we pay particular attention to the possible singularities that may prevent us from using the usual envelope theorem, and examine the impact of such irregularities in the learning process.

\paragraph{Problem statement.}

Considering the empirical measure $\nu$ associated with a discrete dataset $\{y_1,\ldots, y_n\}$ in a compact $\calY$, we aim at inferring a (continuous) generative model $g_\theta : \calZ \to \calX$ (defined from a latent space $\calZ$ to a compact~$\calX$ and depending on parameter $\theta$) whose output distribution $\mu_\theta$ best fits $\nu$.
The function~$g_{\theta}$ pushes a distribution $\zeta$ on the latent space $\calZ$ so that $\mu_{\theta}$ is the push-forward measure
$\mu_\theta = g_\theta \sharp \zeta$ (defined by $g_\theta\sharp\zeta(B) =\zeta(g_\theta^{-1}(B))$).
The goal is thus to compute a parameter $\theta$ that minimizes the optimal transport cost

\begin{equation}
\mathrm{OT}_c(\mu_\theta,\nu) = \inf_{\pi\in\Pi(\mu_\theta,\nu)} \int c(x,y)d\pi(x,y), \label{eq:OTcost}
\end{equation}
where $c :\calX\times\calY \to \R$ is a Lipschitz cost function and $\Pi(\mu_\theta,\nu)$ is the set of probability distributions on $\calX\times\calY$ having marginals $\mu_\theta$ and $\nu$. The direct minimization of $\mathrm{OT}_c(\mu_\theta,\nu)$ with respect to $\theta$ is a  difficult task. However, whenever $\calX$ and $\calY$ are compact, duality holds and the so-called semi-dual formulation of  optimal transport yields~\cite{santambrogio2015ot}
\begin{equation}\label{eq:semidualot}
\mathrm{OT}_c(\mu_\theta,\nu) = \max_{\psi \in L^\infty(\calY)} \int_\calX \psi^{c}(x) d \mu_\theta(x) + \int_\calY \psi(y) d \nu(y),
\end{equation}
where $\psi \in L^\infty(\calY)$ is called a {\em dual potential} and  
\begin{equation}\label{eq:ctransf}\psi^{c}(x) = \min_{y\in\calY} \left[ c(x,y) - \psi(y)\right]
\end{equation}
is the $c$-transform of $\psi$. 
Any dual potential satisfying the max in \eqref{eq:semidualot} will be referred to as a {\em Kantorovitch potential}. 
Denoting as $$F(\psi, \theta)=\int_\calX \psi^{c}(x) d \mu_\theta(x) + \int_\calY \psi(y) d \nu(y),$$ the differentiation with respect to $\theta$ of the optimal transport cost $\OTc(\mu_\theta,\nu)$ can be related to the differentiation under the maximum of the quantity
\begin{equation}
W_c(\theta) = \max_{\psi \in L^\infty(\calY)} F(\psi, \theta).
\end{equation}
We now introduce the following  result from \cite{arjovsky2017wgan} that will be used all along the paper.
\begin{theorem}[Envelop theorem]\label{thm:env}
Let $\theta_0$ and $\psi^*_0$ verifying $W_c(\theta_0) = F(\psi^*_0,\theta_0)$. 
If $W_c$ and ${\theta \mapsto F(\psi^*_0,\theta)}$  are both differentiable at $\theta_0$, then
\begin{equation}\label{eq:gradOT}
\nabla W_c(\theta_0) = \nabla_\theta F(\psi^*_0,\theta_0).
\end{equation}
\end{theorem}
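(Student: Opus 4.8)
The plan is to turn the maximization defining $W_c$ into a single, cleanly-behaved inequality by freezing the optimal potential $\psi^*_0$ as a fixed feasible test function, and then to read off the conclusion from a first-order optimality condition. First I would introduce the gap function
\begin{equation}
h(\theta) = W_c(\theta) - F(\psi^*_0,\theta).
\end{equation}
Because $W_c(\theta) = \max_{\psi\in L^\infty(\calY)} F(\psi,\theta)$ is a maximum over all admissible potentials and $\psi^*_0$ is one particular admissible choice, we obtain $W_c(\theta)\ge F(\psi^*_0,\theta)$ for every $\theta$, hence $h(\theta)\ge 0$ everywhere. This inequality holds with no differentiability assumption and is the only place where the variational structure of $W_c$ is used.

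Next I would invoke the hypothesis $W_c(\theta_0)=F(\psi^*_0,\theta_0)$, which says precisely that $h(\theta_0)=0$. Combined with the global lower bound $h\ge 0$ from the previous step, this exhibits $\theta_0$ as a global minimizer of $h$. The whole argument thus reduces to the elementary fact that a differentiable function attaining a minimum has vanishing gradient there.

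Finally, under the two stated differentiability assumptions, $h$ is differentiable at $\theta_0$ as a difference of functions each differentiable at $\theta_0$, so the first-order optimality condition at the (interior) global minimizer $\theta_0$ gives $\nabla h(\theta_0)=0$. Rearranging yields $\nabla W_c(\theta_0)=\nabla_\theta F(\psi^*_0,\theta_0)$, which is exactly \eqref{eq:gradOT}.

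I expect the genuine difficulty to lie entirely in the hypotheses rather than in the deduction: once the differentiability of $W_c$ at $\theta_0$ is granted, the identity is immediate. The delicate point is that $W_c$ is a pointwise maximum of the maps $\theta\mapsto F(\psi,\theta)$, and such a supremum is in general only a convex-type envelope that may fail to be differentiable exactly where the maximizing potential $\psi^*_0$ is not unique. The hard part is therefore not proving the identity but certifying its assumptions, and it is precisely this gap that the remainder of the paper exploits to construct failure cases and to motivate the entropic-regularized setting in which differentiability can be guaranteed.
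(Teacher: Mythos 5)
Your proof is correct and is essentially identical to the paper's own argument: the paper considers $H(\theta)=F(\psi^*_0,\theta)-W_c(\theta)\le 0$ with $H(\theta_0)=0$ and concludes from the first-order condition at the maximizer, which is your argument up to a sign flip. Your closing remarks about where the real difficulty lies (certifying the differentiability hypotheses) accurately anticipate the point of the rest of the paper.
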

The proof of this \emph{weak} version of the envelop theorem is straightforward.
\begin{proof} 
Let $\theta_0$ and $\psi_0$ be as in the hypothesis of the theorem. Let us define 
\begin{equation}
H : \theta \mapsto F(\psi^*_0,\theta) - W(\theta).
\end{equation}
For all $\theta$, $H(\theta) \leq 0$  from the definition of $W$ and $H(\theta_0) = 0$ from definition of $\psi_0$. Since we assume $H$ differentiable at $\theta_0$, we get $\nabla H(\theta_0) = 0$ and the result follows.
\end{proof}
However, there may exist no couple $(\theta_0, \psi^*_0)$ for which~\eqref{eq:gradOT} holds, even in cases that would seem favorable. 
This scenario can indeed occur for 
$W_c$ being differentiable everywhere, or for $F$ admitting partial derivative in $\theta$ for almost every $\theta$ and $\psi$.

\paragraph{Outline.} 
The main objective of this paper is to identify what may prevent the existence of the right-hand side quantity $\nabla_\theta F(\psi_0,\theta_0)$ in \eqref{eq:gradOT}. 
In Section~\ref{sec:failex}, we study a simple counterexample where $F(\psi^*_0,\cdot)$ is never differentiable at $\theta_0$ and we discuss the impact of this property on the actual Wasserstein GAN procedure. We then demonstrate in Section~\ref{sec:reg} that the entropic regularization of optimal transport allows the application of a stronger version of the envelop theorem and ensures the differentiability of the regularized optimal transport cost. The formulation of the gradient is nevertheless difficult to exploit in practice, as it requires to estimate an expectation over the whole target distribution~$\nu$. We therefore propose in Section~\ref{sec:practicaluse} to take advantage of the discrete nature of the target data $\{y_1,\ldots,y_n\}$ to derive a feasible algorithm in the semi-discrete setting of optimal transport. We finally illustrate the benefits of the proposed framework through numerical experiments in Section~\ref{sec:experiments}.

\section{Case study on a synthetic example\label{sec:failex}}

In this section, we present an example that brings down 
the theoretical assumption made in~\cite{arjovsky2017wgan}.
In Section~\ref{sec:synthex} we demonstrate that the envelop Theorem~\ref{thm:env} does not hold for this example. More precisely, we show that even if the gradient of the optimal transport cost with respect to the parameter exists for any $\theta$, the gradient of the function $F$ does not exist for any $\theta$. 
In Section~\ref{sec:convissue}, we emphasize that for this example, the generative model satisfies the hypothesis made in~\cite{arjovsky2017wgan} and show that it can lead to  convergence instabilities during the training. 

\begin{figure}[t]
    \centering
    \begin{tabular}{cc}
    \includegraphics[width=0.48\linewidth]{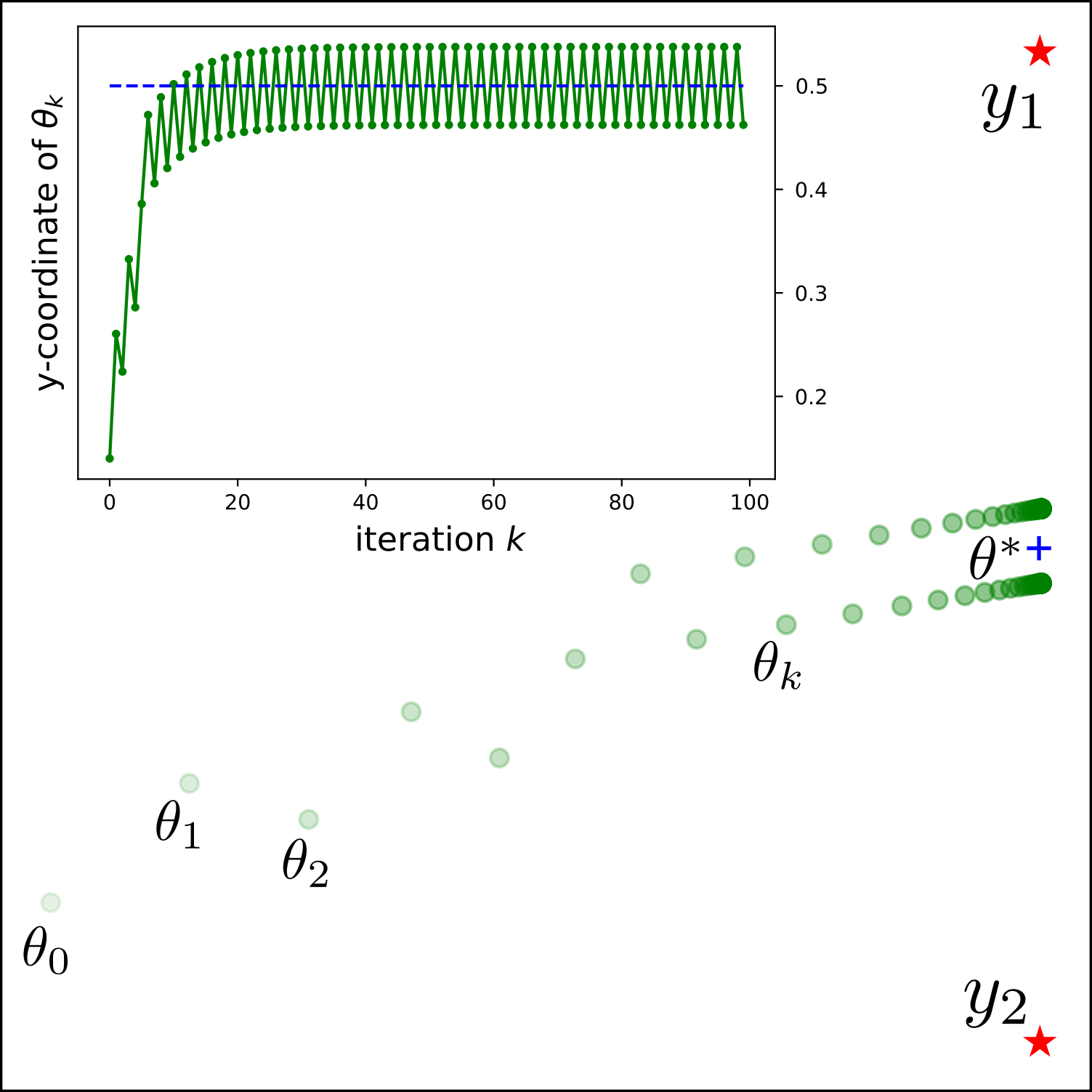}
    &\includegraphics[width=0.48\linewidth]{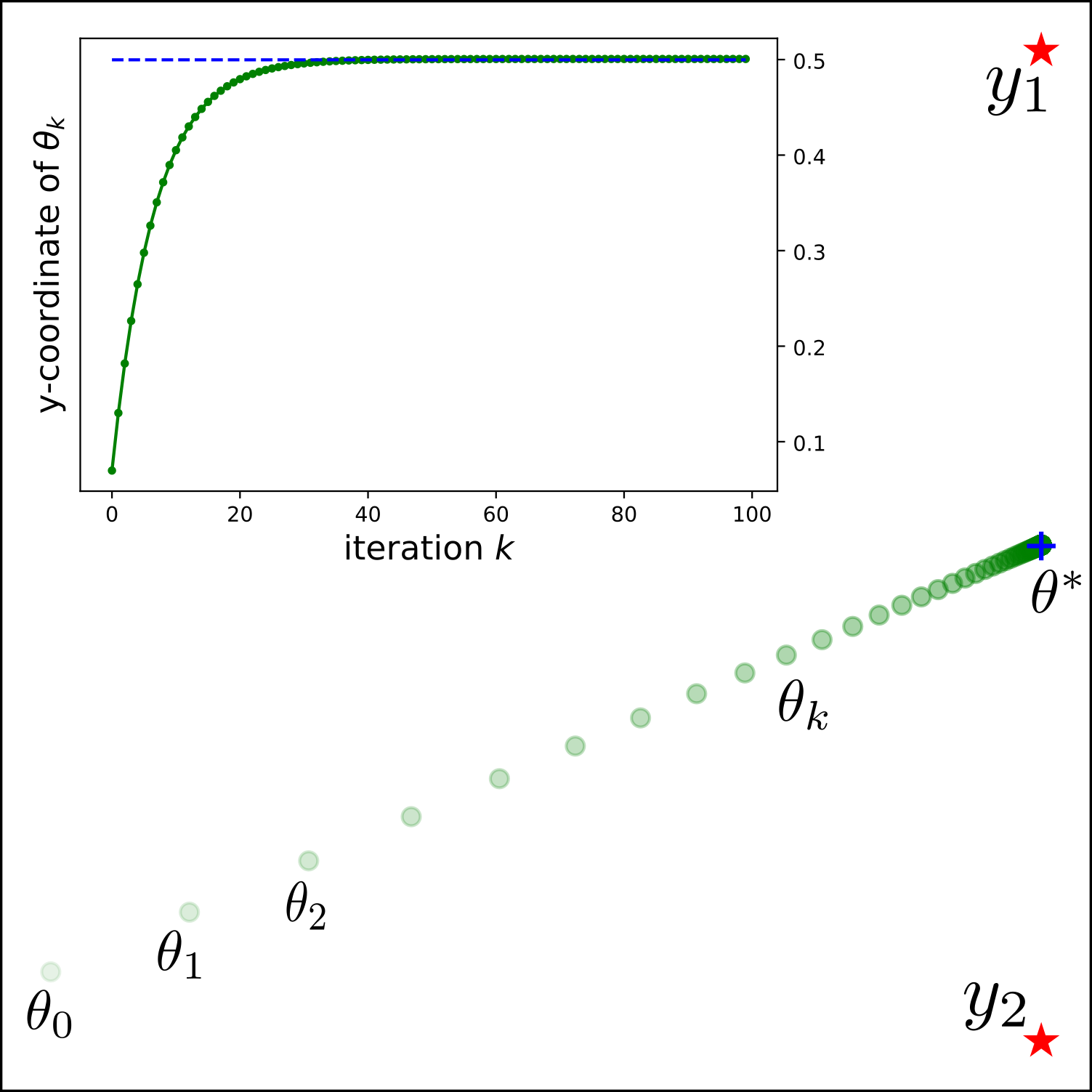}%
    \\
    $\text{OT}_{c}$
    &
    $\text{OT}_{c}^\lambda$
    \end{tabular}
    \label{fig:synthex}
    \caption{Plot of the trajectory of the parameter $\theta^k$ during optimization of the generative model $g_\theta(z) = z - \theta$ for two training points $\{y_1,y_2\}$. Left: as predicted by Proposition~\ref{prop:counterexample}, the process does not converge for the optimal transport $\text{OT}_c$ with quadratic cost $c = \|.\|^2$. Right: as supported by Theorem~\ref{thm:gc1}, the training converges to the solution $\theta^*=(0,0.5)$  when considering regularized optimal transport $\text{OT}_c^\lambda$. See Section~\ref{sec:synth_solved} for more details. 
    }
    \end{figure}

\subsection{An example with discrete measures}\label{sec:synthex}

Let us consider a simple optimal transport problem between a Dirac $\delta_\theta$ located at $\theta\in\mathbf{R}^2$ and a sum of two Diracs at positions $y_1 \neq y_2 \in \mathbf{R}^2$ (see Figure~\ref{fig:synthex}). This setting corresponds to a latent code $\zeta = \delta_0$ and a generator $g_\theta$ defined by $g_\theta(z) = z - \theta$ for all $z\in \calZ$. We show that in this case the hypotheses of Theorem~\ref{thm:env} are satisfied for no~$\theta_0$. More precisely we show the following result.\\
\begin{proposition}\label{prop:counterexample}
Let $\mu_\theta = g_\theta\sharp\zeta = \delta_\theta$ and $\nu = \frac{1}{2}\delta_{y_1} + \frac{1}{2}\delta_{y_2}$ and consider a cost $c(x,y) = \|x-y\|^p$, $p\geq 1$ then\begin{itemize}
\item $\theta \mapsto W_c(\theta) = \OTc(\mu_{\theta},\nu)$ is differentiable at any $\theta \notin \{y_1,y_2\}$ for $p=1$, and at any $\theta$ for $p>1$,
\item for any $\theta_0$ and any $\psi_0^* \in \argmax_\psi F(\psi,\theta_0)$, the function $\theta \mapsto F(\psi_0^*, \theta)$ is \textbf{not} differentiable at $\theta_0$.
\end{itemize}
Hence relation \eqref{eq:gradOT} never stands.
\end{proposition}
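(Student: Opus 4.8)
The plan is to reduce everything to fully explicit formulas, exploiting that the source measure $\mu_\theta=\delta_\theta$ is a single atom. First I would compute $W_c$ directly: since any coupling in $\Pi(\delta_\theta,\nu)$ is supported on $\{\theta\}\times\calY$, the marginal constraints force the \emph{unique} plan $\pi=\delta_\theta\otimes\nu$, so that $W_c(\theta)=\OTc(\delta_\theta,\nu)=\tfrac12 c(\theta,y_1)+\tfrac12 c(\theta,y_2)=\tfrac12\|\theta-y_1\|^p+\tfrac12\|\theta-y_2\|^p$. The first bullet is then immediate from the regularity of $x\mapsto\|x\|^p$: it is $C^1$ away from the origin for every $p\ge1$, and $C^1$ everywhere (with vanishing gradient at $0$) as soon as $p>1$. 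This gives differentiability of $W_c$ at every $\theta\notin\{y_1,y_2\}$ for $p=1$ and at every $\theta$ for $p>1$.

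For the second bullet I would first characterize the maximizers. Writing $a=\psi_0^*(y_1)$, $b=\psi_0^*(y_2)$ and $d_i=c(\theta_0,y_i)$, one has $F(\psi_0^*,\theta_0)=(\psi_0^*)^c(\theta_0)+\tfrac{a+b}2$ and, from the definition of the $c$-transform, $(\psi_0^*)^c(\theta_0)\le\min(d_1-a,\,d_2-b)\le\tfrac12\big((d_1-a)+(d_2-b)\big)$. Chaining these with $F(\psi_0^*,\theta_0)=W_c(\theta_0)=\tfrac12(d_1+d_2)$ forces equality throughout, which yields two facts valid for \emph{any} maximizer: the balance condition $d_1-a=d_2-b$, and the touching identities $(\psi_0^*)^c(\theta_0)=c(\theta_0,y_1)-a=c(\theta_0,y_2)-b$. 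Since moreover $(\psi_0^*)^c(\theta)\le c(\theta,y_i)-\psi_0^*(y_i)$ for all $\theta$ by definition, the function $h:=(\psi_0^*)^c$ lies below both shifted cost branches $f_i(\theta):=c(\theta,y_i)-\psi_0^*(y_i)$ and meets each of them at $\theta_0$. Because $F(\psi_0^*,\cdot)=h+\tfrac{a+b}2$ differs from $h$ by a constant, it suffices to study the differentiability of $h$ at $\theta_0$.

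The core argument is then a simple ``kink lemma'': if $h\le f_1$, $h\le f_2$ with $h(\theta_0)=f_1(\theta_0)=f_2(\theta_0)$, $f_1,f_2$ differentiable at $\theta_0$, and $\nabla f_1(\theta_0)\neq\nabla f_2(\theta_0)$, then $h$ cannot be differentiable at $\theta_0$ --- for otherwise $\theta_0$ would be an interior minimizer of both $f_1-h\ge0$ and $f_2-h\ge0$, forcing $\nabla h(\theta_0)=\nabla f_1(\theta_0)=\nabla f_2(\theta_0)$, a contradiction. Applying this to $h=(\psi_0^*)^c$ reduces the whole statement to the single gradient-separation inequality $\nabla_\theta c(\theta_0,y_1)\neq\nabla_\theta c(\theta_0,y_2)$, with $\nabla_\theta c(\theta,y)=p\|\theta-y\|^{p-2}(\theta-y)$.

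Verifying this separation is the main obstacle, and where the two regimes diverge. For $p>1$ it holds unconditionally: equality of the two gradients would give, after taking norms, $\|\theta_0-y_1\|=\|\theta_0-y_2\|$, and then $\theta_0-y_1=\theta_0-y_2$, i.e.\ $y_1=y_2$, which is excluded; the case $\theta_0\in\{y_1,y_2\}$ is handled separately, since there one gradient vanishes while the other does not. For $p=1$ the gradients are the unit vectors $(\theta_0-y_i)/\|\theta_0-y_i\|$, which differ precisely when the two transport directions are not positively collinear, i.e.\ for every $\theta_0\notin\{y_1,y_2\}$ except those lying on the line through $y_1,y_2$ \emph{outside} the segment $[y_1,y_2]$ (the genuinely degenerate configuration that I expect to require separate mention). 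In all non-degenerate cases the kink lemma yields non-differentiability of $F(\psi_0^*,\cdot)$ at $\theta_0$, so the right-hand side of \eqref{eq:gradOT} does not exist and relation \eqref{eq:gradOT} cannot hold.
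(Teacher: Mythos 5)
Your proof is correct and follows the same overall route as the paper's: both reduce $F(\psi_0^*,\cdot)$ to a minimum of the two shifted branches $\theta\mapsto c(\theta,y_i)-\psi_i^*$, show that optimality of $\psi_0^*$ forces $\theta_0$ to lie exactly where the two branches touch, and conclude non-differentiability from a kink there. Two differences are worth noting. First, you obtain $W_c(\theta)=\tfrac12(c(\theta,y_1)+c(\theta,y_2))$ from the primal side (the coupling $\delta_\theta\otimes\nu$ is the unique element of $\Pi(\delta_\theta,\nu)$), whereas the paper derives it by maximizing the dual; both are fine. Second, and more substantively, your ``kink lemma'' makes explicit the step the paper leaves implicit, namely that non-differentiability requires $\nabla_\theta c(\theta_0,y_1)\neq\nabla_\theta c(\theta_0,y_2)$ --- the paper simply asserts that the gradients ``do not agree at the boundary.'' Your verification of this separation is right for $p>1$, and for $p=1$ it correctly isolates the genuinely degenerate configuration: when $\theta_0$ lies on the line through $y_1,y_2$ outside the segment $[y_1,y_2]$, the two unit gradients coincide, $\min(f_1,f_2)$ \emph{is} differentiable at $\theta_0$, and relation \eqref{eq:gradOT} actually holds there. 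So the second bullet (and the ``never stands'' conclusion) is false on those two rays for $p=1$; this is a flaw in the paper's statement that your more careful treatment exposes, not a gap in your argument. For a fully airtight write-up you should also note, as you implicitly do, that at $\theta_0\in\{y_1,y_2\}$ with $p=1$ the kink lemma is inapplicable ($f_1$ is itself non-smooth there) but \eqref{eq:gradOT} still fails because $W_c$ is not differentiable at those points.
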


\begin{proof}
The dual formulation of optimal transport writes \begin{equation}
OT_c(\mu_\theta, \nu) = \max_{\psi\in\R^2} F(\psi,\theta)
\end{equation} with $F(\psi,\theta) = \min_{i=1,2} \left[c(\theta,y_i) - \psi_i\right] + \frac{\psi_1 + \psi_2}{2}$. We can therefore write 
\begin{equation*}
\begin{split}
&F(\psi,\theta)=\\ &
  \begin{cases}
    c(\theta,y_1) + \frac{\psi_2 - \psi_1}{2} & \text{if } c(\theta,y_1) - \psi_1 \leq c(\theta,y_2) - \psi_2 \\
    c(\theta,y_2) + \frac{\psi_1 - \psi_2}{2} & \text{if } c(\theta,y_2) - \psi_2 \leq c(\theta,y_1) - \psi_1. \\
  \end{cases}
  \end{split}
\end{equation*}
Which yields $2 F(\theta, \psi) \leq c(\theta,y_1) + c(\theta,y_2)$, where equality is reached for $\psi_1 = c(\theta,y_1)$ and $\psi_2 = c(\theta, y_2)$. As a consequence, the optimal transport cost writes
\begin{equation}\label{eq:sol_unreg}
\OTc(\mu_\theta,\nu) = \frac{c(\theta,y_1) + c(\theta,y_2)}{2},
\end{equation}
and it is  differentiable at any $\theta \notin \{y_1,y_2\}$, and even at $\theta \in \{y_1, y_2\}$ for $p>1$. This shows the first point of the proposition.
 
Let us now fix $\theta_0$. One can show that
%For any $\theta$, if $\psi^* \in \argmax F(\psi,\theta)$ then we also have $\psi^*_2 - \psi^*_1 = c(\theta,y_2) - c(\theta, y_1)$ and therefore
\begin{equation*}
\argmax_\psi F(\psi,\theta_0) = \left\{ \left( c(\theta_0, y_1), c(\theta_0,y_2) \right) + C, C\in\R\right\}.
\end{equation*}
Next we fix $\psi^* \in \argmax_\psi F(\psi,\theta_0)$, so that $\psi^*_2 - \psi^*_1 = c(\theta_0,y_2) - c(\theta_0, y_1)$. We define the Laguerre cells for $\psi^*$ 
\begin{equation*}
L_i(\psi^*) = \{x  ~|~ \forall k\neq i, ~ c(x, y_i) - \psi^*_i < c(x,y_k) - \psi^*_k\}
\end{equation*}
as well as the boundary set
\begin{equation*}
H(\psi^*) = \{x  ~|~ c(x, y_1) - \psi^*_1 = c(x,y_2) - \psi^*_2\}.
\end{equation*}
The function $F$ thus writes
\begin{equation*}
\begin{split}
&F(\psi,x)=\\ &
  \begin{cases}
    c(x,y_1) + \frac{\psi_2^* - \psi_1^*}{2} & \text{if } x\in L_1(\psi^*) \\
    c(x,y_2) + \frac{\psi_1^* - \psi_2^*}{2} & \text{if } x\in L_2(\psi^*)\\
    c(x,y_1) + \frac{\psi_2^* - \psi_1^*}{2} = c(x,y_2) + \frac{\psi_1^* - \psi_2^*}{2} & \text{if } x\in H(\psi^*)
  \end{cases}
  \end{split}
\end{equation*}

Notice that both Laguerre cells 
$L_1(\psi^*)$ and $L_2(\psi^*)$ are open sets. 
The map $x \mapsto F(\psi^*,x)$\ is differentiable on these cells and we have
\begin{equation}
\forall x \in L_i(\psi^*), ~ \nabla_x F(\psi^*,x) = \nabla_x c(x,y_i).
\end{equation}

Therefore, in the neighborhood of a point $x \in H(\psi^*)$, the restrictions of $x \mapsto F(\psi^*,x)$ on $L_1(\psi^*)$ and $L_2(\psi^*)$ are smooth but their gradients do not agree at the boundary in-between.
% Since  $\psi^*$ satisfies $\psi^*_2 - \psi^*_1 = c(\theta_0,y_2) - c(\theta_0, y_1)$, we obtain that $\theta_0 \in H(\psi^*)$. 
% -> ???
As a consequence, $\theta \mapsto F(\psi^*,\theta)$ is not differentiable at any $\theta_0 \in H(\psi^*)$ which shows the second point of the proposition.
\end{proof}

\subsection{Consequences on the convergence of WGAN\label{sec:convissue}}

Wasserstein GAN approaches exploit the gradient $\nabla_\theta F$ in \eqref{eq:gradOT}  to train a generative model $g_\theta$. Relation~\eqref{eq:gradOT} is stated in \cite{arjovsky2017wgan} under the following hypothesis relying on the Lipschitz properties of the  generator. 

\begin{hyp}\label{hyp:lip}
$g:\Theta\times \calZ \to \calX$ satisfies Hypothesis~\ref{hyp:lip} if there exists $L:\Theta\times \calZ \to \R_+$ such that $\forall \theta$, there exists a neighborhood $\Omega$ of $\theta$ such that $\forall \theta' \in \Omega$ and $\forall z,z'\in \calZ$
\begin{equation}
\|g(\theta, z) - g(\theta',z')\| \leq L(\theta,z)\| (\theta,z)-(\theta',z')\|
\end{equation}
\begin{equation}
\text{and for all $\theta$} \qquad \mathbf{E}_{Z\sim\zeta}\left[L(\theta,Z)\right] := L(\theta) < \infty.
\end{equation}
\end{hyp}

In our previous example of Section \ref{sec:synthex}, we demonstrate  that 
the envelop theorem does not hold for the   generator $g_\theta(z)= z-\theta$. This generator nevertheless satisfies  Hypothesis \ref{hyp:lip} as it is locally Lipschitz with constant  $L(\theta,z)=1$.
Hence, {\bf the example we designed raises a flaw of Theorem 3 in} \cite{arjovsky2017wgan}, although this result is used to justify the existence of gradients for the training of most of WGAN-based methods.

Let us now emphasize an important point.
The envelop theorem used in \cite{arjovsky2017wgan} requires both terms of \eqref{eq:gradOT} to exist simultaneously. In fact, the differentiability of $F(\psi,\cdot)$ for a given $\psi$ is only guaranteed at almost every~$\theta$. Therefore, if we consider the optimal potential $\psi_0^*$ corresponding to a given $\theta_0$ the function $F(\psi_0^*,\cdot)$ may not be differentiable at $\theta_0$. This is exactly what happens in our synthetic example. 

Such issue occurs in any case where the generated distribution is not absolutely continuous with respect to the Lebesgue measure. This is typically the case when the latent space $\calZ$ of a generative model is lower dimensional than the ambient data space $\calX$.

\section{Gradient of the regularized optimal transport cost}
\label{sec:reg}

In this section, we propose to exploit the differential properties of the entropic regularization of optimal transport, in order to have a well-posed training procedure for WGAN. More precisely, we show that under some assumptions on the generator $g_\theta$, we can compute the gradient of the regularized optimal transport cost between the generated measure $\mu_\theta$ and the target measure $\nu$. 
In Section~\ref{sec:regOT}, we first recall results on the entropic regularization of optimal transport. In particular, we formulate  the regularized optimal transport cost as the maximum  of a function $F_c^\lambda(\psi,\theta)$ over $\psi$. 
Then, we show in Section~\ref{sec:diffF} the differentiability of $F_c^\lambda$ with respect to $\theta$, under the Hypothesis~\ref{hyp:lip} on the generator~$g_\theta$. In Section~\ref{sec:diffOT} we demonstrate the differentiability of the optimal transport cost under different hypotheses, relating the gradient of the optimal transport cost to the gradient of $F_c^\lambda$. 
In Section~\ref{sec:synth_solved} we finally apply these results to the synthetic example from Section~\ref{sec:synthex}.

\subsection{Definitions and requirements}\label{sec:regOT}

We consider from now on the entropic regularization of optimal transport. 
\begin{definition}[OT cost with entropic regularization \cite{genevay2019thesis}]
For $\lambda >0$, the regularized OT cost is defined by
\begin{equation}
\mathrm{OT}_c^\lambda(\mu,\nu) = \inf_{\pi\in\Pi(\mu,\nu)} \int c(x,y)d\pi(x,y) + \lambda H(\pi|\mu\otimes\nu)  \label{eq:ot_reg}
\end{equation}
where 
\begin{equation} \label{eq:relative_entropy}
 H(\pi|\mu\otimes\nu) = \int \left(\log\left(\frac{ d\pi(x,y)}{d(\mu(x)\nu(y))} \right) - 1 \right)d\pi(x,y) + 1
\end{equation}
is the relative entropy of the transport plan $\pi$ w.r.t. the product measure $\mu\otimes\nu$.
\end{definition}
Note that $\lambda>0$  makes the problem~\eqref{eq:ot_reg} strictly convex, whereas $\lambda=0$ corresponds to the non-regularized case from \eqref{eq:OTcost}.  In the following, we may either refer to $OT_c$ or $OT_c^0$. We consider the  relative entropy as in \cite{genevay2019thesis} which differs from the regularized formulation originally introduced in \cite{cuturi2013sinkhorn}. As shown in \cite{genevay2019thesis}, the  dual formulation of the problem with relative entropy can be expressed as the maximum of an expectation. In order to write the semi-dual  formulation as in \eqref{eq:semidualot}, we introduce the regularized $c,\lambda$-transform:
\begin{definition}[regularized $c,\lambda$-transforms]
Let $\psi:\mathcal{Y} \to \mathbf{R}$ we define the $c,\lambda$-transform for $\lambda >0$ by
\begin{equation}\label{eq:def_psicl}
    \psi^{c,\lambda}(x) = -\lambda\log\left(\int_\calY \exp\left(\frac{\psi(y) - c(x,y)}{\lambda}\right)d \nu(y)\right).
\end{equation}
\end{definition}
This formula with $\lambda=0$ yields the $c$-transform $\psi^c$ introduced earlier. The semi-dual formulation then writes.
\begin{proposition}[Semi-dual formulation of regularized transport] \label{th:reg_ot_semidual}
The primal problem \eqref{eq:ot_reg} is equivalent to the semi-dual problem
\begin{equation}\label{eq:semidual_l}
\mathrm{OT}_c^\lambda(\mu,\nu) = \max_{\psi\in L^\infty(\mathcal{Y})} \int_\calX \psi^{c,\lambda}(x)d\mu(x) + \int_\calY\psi(y) d\nu(y).
  \end{equation}
\end{proposition}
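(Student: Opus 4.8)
The plan is to prove the two inequalities between the primal value $\OTc^\lambda(\mu,\nu)$ and the semi-dual value $J(\psi):=\int_\calX \psi^{c,\lambda}\,d\mu+\int_\calY \psi\,d\nu$, and then to exhibit a plan/potential pair realizing equality. Throughout I would use that $\mu,\nu$ are probability measures on the compacts $\calX,\calY$, so that $H(\pi|\mu\otimes\nu)=\kl(\pi|\mu\otimes\nu)$ for $\pi\in\Pi(\mu,\nu)$ (because $\int d\pi=1$), and in particular the objective is finite only when $\pi\ll\mu\otimes\nu$.

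First, the easy direction (weak duality, $\geq$). Fix any $\psi\in L^\infty(\calY)$ and any $\pi\in\Pi(\mu,\nu)$ with $H(\pi|\mu\otimes\nu)<\infty$. I disintegrate $\pi$ along its first marginal, $d\pi(x,y)=d\pi_x(y)\,d\mu(x)$, so that $\pi_x\ll\nu$ and the chain rule for relative entropy gives $\kl(\pi|\mu\otimes\nu)=\int_\calX \kl(\pi_x|\nu)\,d\mu(x)$. For fixed $x$, the Gibbs (Donsker--Varadhan) variational inequality applied to the reference measure $\nu$ and the function $y\mapsto \psi(y)-c(x,y)$ reads
$$\lambda\log\int_\calY e^{(\psi(y)-c(x,y))/\lambda}\,d\nu(y)\;\geq\;\int_\calY(\psi(y)-c(x,y))\,d\pi_x(y)-\lambda\,\kl(\pi_x|\nu),$$
whose left-hand side is exactly $-\psi^{c,\lambda}(x)$. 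Rearranging, integrating against $\mu$, and using $\int_\calX\!\int_\calY \psi\,d\pi_x\,d\mu=\int_\calY\psi\,d\nu$ (the second marginal is $\nu$) yields
$$\int c\,d\pi+\lambda H(\pi|\mu\otimes\nu)\;\geq\;\int_\calX\psi^{c,\lambda}\,d\mu+\int_\calY\psi\,d\nu.$$
Taking the infimum over $\pi$ and the supremum over $\psi$ gives $\OTc^\lambda(\mu,\nu)\geq\sup_\psi J(\psi)$.

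Second, attainment and absence of a gap. The Gibbs inequality is an equality precisely when $\pi_x$ is the Gibbs conditional $d\pi_x(y)\propto e^{(\psi(y)-c(x,y))/\lambda}\,d\nu(y)$, which after normalization is $d\pi_x(y)=e^{(\psi(y)-c(x,y)+\psi^{c,\lambda}(x))/\lambda}\,d\nu(y)$. So I would fix a maximizer $\psi^\star$ of the functional $J$, which is concave because $\psi\mapsto\psi^{c,\lambda}(x)$ is concave (it is $-\lambda$ times a log-integral of an exponential, i.e.\ a soft-min of $c(x,\cdot)-\psi$) while $\psi\mapsto\int\psi\,d\nu$ is linear; existence of $\psi^\star$ on the compact $\calY$ with $c$ bounded continuous follows by the direct method after quotienting out the additive invariance $\psi\mapsto\psi+C$. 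The first-order optimality condition for $\psi^\star$ (differentiating $J$ under the integral) is exactly the statement that $d\pi^\star(x,y)=e^{(\psi^\star(y)-c(x,y)+(\psi^\star)^{c,\lambda}(x))/\lambda}\,d\mu(x)\,d\nu(y)$ has second marginal $\nu$; by construction its first marginal is $\mu$, so $\pi^\star\in\Pi(\mu,\nu)$. Since $\pi^\star$ realizes the equality case of the Gibbs inequality at $\psi=\psi^\star$, the chain of inequalities above collapses, giving $\OTc^\lambda(\mu,\nu)=J(\psi^\star)=\max_\psi J(\psi)$.

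The main obstacle is this second step: guaranteeing that a maximizer $\psi^\star\in L^\infty(\calY)$ exists and that differentiation of $J$ under the integral sign is licit, so that the optimality condition can be read as feasibility of $\pi^\star$. This is exactly where compactness of $\calX,\calY$ and boundedness/continuity of $c$ are essential, since they make $\psi^{c,\lambda}$ finite and smooth in $\psi$, render $J$ upper semicontinuous and coercive modulo constants, and force the Schr\"odinger potentials to be bounded. Equivalently, one may bypass the explicit construction and invoke Fenchel--Rockafellar duality as in \cite{genevay2019thesis}: the primal objective is strictly convex and lower semicontinuous and $\Pi(\mu,\nu)$ is weakly compact by Prokhorov, so a unique minimizer exists and the dual is attained with zero gap; the semi-dual form then follows by partially maximizing the full dual over the $\calX$-potential, which reproduces precisely the $c,\lambda$-transform.
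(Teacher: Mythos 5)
Your argument is correct, but note that the paper itself gives no proof of this proposition: it is recalled from \cite{genevay2019thesis}, where it is obtained by Fenchel--Rockafellar duality on the full two-potential dual followed by partial maximization over the $\calX$-potential (which is exactly what produces the $c,\lambda$-transform). Your route is genuinely different and more elementary: identifying $H(\pi|\mu\otimes\nu)$ with $\kl(\pi|\mu\otimes\nu)$ for probability plans, disintegrating $\pi$ over its first marginal, and applying the Donsker--Varadhan (Gibbs) inequality fibrewise gives weak duality with no functional-analytic machinery, and the equality case of that same inequality identifies the primal optimizer explicitly as the Gibbs/Schr\"odinger plan built from a dual maximizer; the normalization $\int_\calY e^{(\psi^\star(y)-c(x,y)+(\psi^\star)^{c,\lambda}(x))/\lambda}d\nu(y)=1$ and the first-order condition for $\psi^\star$ do give the two marginal constraints, so the chain of inequalities indeed collapses. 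The one step you leave as a sketch --- existence of a maximizer $\psi^\star\in L^\infty(\calY)$ and the legitimacy of differentiating $J$ under the integral sign --- is precisely what the paper supplies elsewhere: the theorem recalled immediately after the proposition asserts existence and $\nu$-a.e.\ uniqueness of the dual solution for bounded $c$, and Lemma~\ref{lem:modulus} shows that all Kantorovitch potentials share the modulus of continuity of $c$, which, after fixing the additive constant, yields the equicontinuity and uniform boundedness needed for your direct-method argument via Arzel\`a--Ascoli; dominated convergence is available throughout because $c$ and $\psi$ are bounded on the compacts. In short, your approach buys a self-contained proof that also exhibits the optimal plan in closed form, while the citation-based route buys brevity and the strict-convexity and uniqueness statements for free.
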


We then recall the following theorem that will be used in the next section.
\begin{theorem}[Existence and uniqueness of the dual solution \cite{genevay2019thesis}]
Providing $c\in L^\infty(\calX\times\calY)$, the dual problem \eqref{eq:semidual_l} admits a solution $\psi^*\in L^\infty(\nu)$ which is unique $\nu-a.e.$ up to an additive constant. Any solution $\psi^*$ will be referred to as a Kantorovitch potential.
\end{theorem}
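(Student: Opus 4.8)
The plan is to realise the semi-dual objective as a concave functional and then combine an a priori bound (for existence) with a strict-concavity argument (for uniqueness). Write $G(\psi) = \int_\calX \psi^{c,\lambda}(x)\,d\mu(x) + \int_\calY \psi(y)\,d\nu(y)$. First I would record two structural facts. For fixed $x$, the map $\psi \mapsto \log\int_\calY \exp((\psi(y)-c(x,y))/\lambda)\,d\nu(y)$ is a continuous log-sum-exp, hence convex in $\psi$; therefore $\psi \mapsto \psi^{c,\lambda}(x)$ is concave, and integrating against $\mu$ keeps $G$ concave. Moreover a direct computation gives $(\psi+C)^{c,\lambda} = \psi^{c,\lambda} - C$, so $G(\psi+C)=G(\psi)$ because $\mu,\nu$ are probability measures; this invariance is precisely the source of the ``up to an additive constant'' statement.

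For existence, the key is an a priori $L^\infty$ bound after normalisation. I would introduce the dual transform against $\mu$, namely $\bar\psi(y) = -\lambda\log\int_\calX \exp((\psi^{c,\lambda}(x)-c(x,y))/\lambda)\,d\mu(x)$. One checks that this Sinkhorn half-step does not decrease the objective, $G(\bar\psi)\ge G(\psi)$ (it is coordinate ascent for the two-potential dual whose partial maximum in the first slot is $\psi^{c,\lambda}$), and that $\bar\psi$, being a soft-min of the $c(\cdot,y)$, has oscillation controlled by that of $c$: comparing the integrands for $y$ and $y'$ gives $|\bar\psi(y)-\bar\psi(y')| \le \sup_x |c(x,y)-c(x,y')| \le 2\|c\|_\infty$. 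Hence, subtracting a constant, a maximising sequence may be taken uniformly bounded in $L^\infty(\nu)$. Since $\calY$ is compact so that $L^1(\nu)$ is separable, bounded balls are weak-$*$ sequentially compact (Banach--Alaoglu), and I extract a weak-$*$ limit $\psi^*$. As $G$ is concave and norm-upper-semicontinuous (the uniform bounds let me pass to the limit in the exponential integrals by dominated convergence), it is weak-$*$ upper semicontinuous, so $\psi^*$ attains the maximum and lies in $L^\infty(\nu)$.

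For uniqueness, suppose $\psi_0,\psi_1$ both maximise and set $\psi_t=(1-t)\psi_0+t\psi_1$, $h=\psi_1-\psi_0$. By concavity the whole segment is maximising, so $t\mapsto G(\psi_t)$ is affine; since $\int\psi_t\,d\nu$ is affine, $t\mapsto \int_\calX (\psi_t)^{c,\lambda}\,d\mu$ is affine too. Differentiating twice, its second derivative equals $-\tfrac1\lambda\int_\calX \mathrm{Var}_{p_x^t}(h)\,d\mu(x)$, where $p_x^t$ is the Gibbs probability $dp_x^t(y)\propto \exp((\psi_t(y)-c(x,y))/\lambda)\,d\nu(y)$. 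Affineness forces $\mathrm{Var}_{p_x^t}(h)=0$ for $\mu$-a.e.\ $x$; and because $c,\psi_t\in L^\infty$ the density of $p_x^t$ is strictly positive and bounded, so $p_x^t$ is equivalent to $\nu$, and vanishing variance means $h$ is $\nu$-a.e.\ constant. This yields uniqueness up to an additive constant.

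I expect the existence step to be the main obstacle: establishing the uniform normalised bound through the soft transform and verifying the weak-$*$ upper semicontinuity needed to pass to the limit. The uniqueness, by contrast, reduces to the clean observation that the Gibbs measures $p_x^t$ share the null sets of $\nu$, so degeneracy of $G$ along a segment can only come from a constant direction.
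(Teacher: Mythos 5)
The paper does not actually prove this statement: it is imported from \cite{genevay2019thesis} without proof, so your proposal can only be measured against the standard argument. Your overall strategy --- direct method for existence after a Sinkhorn half-step normalisation, strict concavity along segments for uniqueness --- is a legitimate route, and most of it is sound. The uniqueness half is correct as written: the second derivative of $t\mapsto\int(\psi_t)^{c,\lambda}\,d\mu$ is indeed $-\tfrac1\lambda\int\mathrm{Var}_{p_x^t}(h)\,d\mu(x)$, and since $c,\psi_t\in L^\infty$ the Gibbs weights are bounded above and below, so $p_x^t$ is equivalent to $\nu$ and vanishing variance forces $h$ to be $\nu$-a.e.\ constant. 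The a priori bound via the transform against $\mu$ and the oscillation estimate $|\bar\psi(y)-\bar\psi(y')|\le\sup_x|c(x,y)-c(x,y')|$ are also correct; this is essentially the same computation the paper uses in Lemma~\ref{lem:modulus}.

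The one genuine gap is the semicontinuity step in the existence argument. You claim that $G$, being concave and norm-upper-semicontinuous, is weak-$*$ upper semicontinuous; that implication holds for the weak topology (Mazur) but \emph{not} for the weak-$*$ topology of $L^\infty(\nu)=(L^1(\nu))^*$ --- a norm-continuous linear functional on $L^\infty$ that is not weak-$*$ continuous already refutes it. Likewise the parenthetical appeal to dominated convergence fails: weak-$*$ convergence of $\psi_n$ gives no pointwise convergence, and $\int e^{(\psi_n-c(x,\cdot))/\lambda}d\nu$ need not converge to $\int e^{(\psi-c(x,\cdot))/\lambda}d\nu$ (a $\pm1$ oscillating sequence converges weak-$*$ to $0$ while the exponential integrals tend to $\cosh(1/\lambda)\neq 1$). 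The step is repairable within your framework: writing $e^t=\sup_{s>0}(st-s\log s+s)$ exhibits $\psi\mapsto\int e^{(\psi-c(x,\cdot))/\lambda}d\nu$ as a supremum of weak-$*$ continuous affine functionals, hence weak-$*$ lower semicontinuous; composing with the decreasing map $-\lambda\log$ makes $\psi\mapsto\psi^{c,\lambda}(x)$ weak-$*$ upper semicontinuous, and reverse Fatou (the transforms are uniformly bounded above along your normalised sequence) gives upper semicontinuity of $G$, which is the direction needed for a maximisation. Alternatively, under the paper's standing assumption that $c$ is continuous on the compact $\calX\times\calY$, your half-step iterates are equicontinuous by the modulus-of-continuity argument of Lemma~\ref{lem:modulus}, so Arzel\`a--Ascoli yields a uniformly convergent subsequence and the limit passage becomes elementary; only the bare hypothesis $c\in L^\infty$ of the statement forces the weak-$*$ route, and then the convex-duality representation above is the missing ingredient.
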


As in the un-regularized case, we consider $\mu_\theta = g_\theta\sharp\zeta$ defined through a generative model. Defining
\begin{equation}\label{eq:def_Fcl}F_c^\lambda(\theta, \psi)=\E_{Z\sim \zeta} [\psi^{c,\lambda}(g_\theta(Z))] + \E_{Y\sim \nu} [\psi(Y)],\end{equation}
our main goal is to differentiate with respect to the parameter $\theta$ the quantity
\begin{align}\label{eq:def_Wl}
W_c^\lambda(\theta) &= \mathrm{OT}_c^\lambda(\mu_\theta,\nu) =  \max_{\psi\in L^\infty(\calY)} F_c^\lambda(\theta, \psi).
\end{align}

Before going further let us state the following Lemma about regularity of the Kantorovitch potentials 

\begin{lemma}\label{lem:modulus} %Let $c$ be a continuous cost on a compact set and hence uniformly continuous.
If $c$ is uniformly continuous on $\calX\times\calY$, there exists a function $\omega : \mathbf{R}_+ \to \mathbf{R}_+$ not-decreasing, continuous at $0$ with $\omega(0)=0$  called \emph{modulus of continuity} of $c$ such that $\forall x,x' \in\calY$ and $\forall y,y'\in\calY$,
$$|c(x,y) - c(x',y')| \leq \omega(\|x-x'\| + \|y-y'\|).$$
In this case, any Kantorovicth potential shares this same modulus of continuity.
\end{lemma}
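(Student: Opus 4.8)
The plan is to treat the two assertions separately: the existence of a modulus of continuity for $c$ is a classical fact, while the inheritance of this modulus by a Kantorovitch potential is the real content and will rest on the log-sum-exp structure of the regularized transform \eqref{eq:def_psicl}.

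For the existence, I would exhibit the optimal (smallest) modulus explicitly by setting, for $t>0$,
\[
\omega(t) = \sup\left\{\,|c(x,y)-c(x',y')| \;:\; \|x-x'\|+\|y-y'\| \le t\,\right\},
\]
and $\omega(0)=0$. Monotonicity is immediate since the supremum runs over a nested family of constraint sets, and finiteness follows from boundedness of $c$ on the compact set $\calX\times\calY$. The only place where the hypothesis is used is continuity at $0$: given $\ep>0$, uniform continuity yields $\delta>0$ such that $\|x-x'\|+\|y-y'\|\le\delta$ implies $|c(x,y)-c(x',y')|\le\ep$, whence $\omega(t)\le\ep$ for all $t\le\delta$. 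The announced inequality then holds by the very definition of $\omega$.

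For the second assertion, the key is to represent any Kantorovitch potential as a regularized transform. Writing $\phi^*=(\psi^*)^{c,\lambda}$ for the partner potential on $\calX$, the first-order (Sinkhorn) optimality conditions of the semi-dual problem \eqref{eq:semidual_l} force $\psi^*$ to coincide, $\nu$-a.e., with
\[
y \longmapsto -\lambda\log\int_\calX \exp\!\left(\frac{\phi^*(x)-c(x,y)}{\lambda}\right)d\mu(x).
\]
I would then show that \emph{any} function of this form inherits $\omega$ in the variable $y$. Fixing $y,y'$ and applying the first part with $x=x'$, one has $-c(x,y)\le -c(x,y')+\omega(\|y-y'\|)$ for every $x$, so the integrand at $y$ is bounded by $e^{\omega(\|y-y'\|)/\lambda}$ times the integrand at $y'$; integrating and applying $-\lambda\log$ (which reverses the inequality) gives $\psi^*(y)\ge\psi^*(y')-\omega(\|y-y'\|)$, and exchanging $y$ and $y'$ yields $|\psi^*(y)-\psi^*(y')|\le\omega(\|y-y'\|)$.

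The step I expect to be the main obstacle is justifying this transform representation, namely that a maximizer of \eqref{eq:semidual_l} can be taken as a regularized transform and that the pointwise modulus is not affected by the $\nu$-a.e. indeterminacy guaranteed by the existence-uniqueness theorem recalled above; I would resolve this by systematically working with the canonical continuous representative defined by the displayed formula. For completeness I would note that the same conclusion covers the earlier unregularized notion $\lambda=0$: there a maximizer can be chosen as the ($\min$-based) transform $\psi^*(y)=\min_{x}[c(x,y)-\phi^*(x)]$, and selecting $x$ optimal at $y'$ shows directly that $\psi^*(y)-\psi^*(y')\le\omega(\|y-y'\|)$, so the $\min$-version of the estimate closes that case identically.
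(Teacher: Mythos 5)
Your proposal is correct and follows essentially the same route as the paper: the explicit supremum definition of $\omega$ with continuity at $0$ from uniform continuity, and then the representation of the Kantorovitch potential as a $c,\lambda$-transform of a partner potential $\varphi$ on $\calX$, with the exponentiate--integrate--take $-\lambda\log$ argument to transfer the modulus. Your extra care about the $\nu$-a.e.\ indeterminacy (working with the canonical continuous representative) and the remark on the $\lambda=0$ case are welcome additions the paper glosses over, but they do not change the substance of the argument.
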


This lemma contains two different points to demonstrate. The first one can be shown using a standard analysis result stating that any uniformly continuous function admits a modulus of continuity. The second one is specific to optimal transport theory. Let us demonstrate the two propositions separately.

\begin{proposition}
Let $(\mathcal{M},|\cdot|)$ be a metric space and $f: \mathcal{M} \to\mathbf{R}$ be a uniformly continuous function. Then $f$ admits a modulus of continuity $\omega : \mathbf{R}_+ \to \mathbf{R}_+$ conitnuous at $0$ with $\omega(0)=0$.
\end{proposition}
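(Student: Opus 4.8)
The plan is to write down the \emph{optimal} modulus explicitly and then check the required properties. I would set, for $t \geq 0$,
\[
\omega(t) \;=\; \sup\bigl\{\, |f(x) - f(x')| \;:\; x,x' \in \mathcal{M}, \ |x-x'| \leq t \,\bigr\}.
\]
This is by design the smallest function satisfying the wanted inequality, so establishing its properties suffices. Two of them are immediate from the definition: $\omega(0) = 0$, since the constraint $|x-x'| \leq 0$ forces $x = x'$ and hence $f(x)-f(x') = 0$; and $\omega$ is non-decreasing, since increasing $t$ only enlarges the set over which the supremum is taken. Moreover $|f(x)-f(x')| \leq \omega(|x-x'|)$ holds for every pair $x,x'$ directly by construction, which is exactly the defining property of a modulus of continuity.

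The one substantive step is continuity of $\omega$ at $0$, and this is precisely where \emph{uniform} continuity is used rather than mere pointwise continuity. Given $\ep > 0$, uniform continuity of $f$ supplies $\delta > 0$ such that $|x-x'| \leq \delta$ implies $|f(x)-f(x')| \leq \ep$; taking the supremum over all such pairs gives $\omega(\delta) \leq \ep$. By monotonicity, $0 \leq \omega(t) \leq \ep$ for all $t \in [0,\delta]$, whence $\omega(t) \to 0 = \omega(0)$ as $t \to 0^+$, i.e. $\omega$ is continuous at $0$.

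Finally I would check that $\omega$ is genuinely $\mathbf{R}_+$-valued. Applying the previous step with $\ep = 1$ already gives $\omega(t) \leq 1$ for $t \leq \delta_1$; and in the situation of Lemma~\ref{lem:modulus} the underlying space is $\mathcal{M} = \calX \times \calY$, which is compact, so $f = c$ is bounded and $\omega(t) \leq 2\|c\|_\infty < \infty$ for every $t \geq 0$. This boundedness is what guarantees finiteness globally.

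The argument is routine, so I do not expect a genuine obstacle; if anything, the only points requiring care are to recognize that it is uniform continuity (and not pointwise continuity) that forces $\omega(t) \to 0$, and the minor bookkeeping that finiteness of the supremum rests on $f$ being bounded, which is ensured here by the compactness hypothesis maintained throughout the paper.
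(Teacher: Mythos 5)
Your proof is correct and follows essentially the same route as the paper's: both define the canonical modulus $\omega(t)=\sup\{|f(x)-f(x')| : |x-x'|\leq t\}$ and use uniform continuity to force $\omega(t)\to 0$ as $t\to 0^+$. The only difference is cosmetic --- the paper caps $\omega$ at $2\sup_{\mathcal{M}}|f|$ for $t$ beyond the threshold $\delta$ to secure finiteness, whereas you keep the supremum definition throughout and (rightly) note that global finiteness rests on the boundedness of $f$, which holds in the paper's compact setting.
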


\begin{proof}
From uniform continuity, let $\delta > 0$ be such that $\|a - b\| < \delta$ implies $|f(a) - f(b)|< 1$. Let us take $\omega$ defined for all $t \leq \delta$ by $\omega(t) = \sup\left\{ |f(a) - f(b)| ~|~ \|a - b\| \leq t \right\}$ and $\omega(t) = 2\sup_{x\in\mathcal{M}}\|f(x)\|$ whenever $t>\delta$.
From this definition, one obtains  $\omega$ positive, non-decreasing and the fact that $\forall a,b \in \mathcal{M}$, $|f(a) - f(b)| \leq \omega( \|a - b\| )$ and $\omega(0) =0$. Let us now show that $\omega$ is continuous at $0$, that is to say that $\omega(t)\to 0$ when $t\to0$. Since $\omega$ is positive and monotone, the limit exists and we denote $l = \lim_{t\to0}\omega(t)$. Let $\varepsilon >0$, the uniform continuity yields that there exists $\eta>0$ such that $\|a - b\| < \eta$ implies $|f(a) - f(b)|< \varepsilon$. Then for any $t < \min(\delta, \eta)$ we have $\omega(t) \leq \varepsilon$ and therefore $l \leq\varepsilon$. This shows that $l=0$ and concludes the proof.
\end{proof}

\begin{proposition}
Let $\omega$ be a modulus of continuity of the cost $c$, then any Kantorovitch potential of $\OTc^\lambda$ shares the same modulus of continuity as $c$.
\end{proposition}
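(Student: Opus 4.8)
The plan is to show that each Kantorovitch potential $\psi^*$ inherits the modulus of continuity $\omega$ of the cost $c$. The key structural fact I want to exploit is that at optimality, the potential $\psi^*$ and its transform are linked by a reciprocity: not only is $\psi^{*c,\lambda}$ obtained from $\psi^*$ via the $c,\lambda$-transform on $\mathcal{X}$, but $\psi^*$ itself must equal a dual $(c,\lambda)$-transform of $\psi^{*c,\lambda}$ on $\mathcal{Y}$. Once $\psi^*$ is expressed as such a soft-min (log-sum-exp) of quantities of the form $c(x,y) - \psi^{*c,\lambda}(x)$, bounding its modulus of continuity reduces to controlling how a log-sum-exp inherits regularity from its argument in the $y$ variable.

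**First I would** establish the optimality/reciprocity condition. Writing the first-order condition for the maximization in \eqref{eq:semidual_l}, or equivalently invoking the standard fact that at the optimum the dual variable is stationary, one gets that for $\nu$-a.e. $y$,
\begin{equation*}
\psi^*(y) = -\lambda \log\left(\int_\calX \exp\left(\frac{\psi^{*c,\lambda}(x) - c(x,y)}{\lambda}\right) d\mu(x)\right),
\end{equation*}
so that $\psi^*$ is a soft-min over $\calX$ of $y \mapsto c(x,y) - \psi^{*c,\lambda}(x)$. Since $\psi^*$ is defined only $\nu$-a.e., I would work with this continuous representative, which is the natural object that carries a modulus of continuity everywhere on $\calY$.

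**Then I would** propagate the modulus through this expression. Fix $y, y' \in \calY$. For each fixed $x$, the integrand's exponent differs by $\frac{1}{\lambda}(c(x,y') - c(x,y))$, which by Lemma~\ref{lem:modulus}'s hypothesis is bounded in absolute value by $\frac{1}{\lambda}\omega(\|y - y'\|)$. The crucial elementary estimate is that the $\lambda\log\int\exp(\cdot/\lambda)$ operator is nonexpansive for the sup-norm perturbation: if two integrands differ pointwise by at most $\eta$, then the resulting soft-mins differ by at most $\eta$. Applying this with $\eta = \omega(\|y - y'\|)$ gives
\begin{equation*}
|\psi^*(y) - \psi^*(y')| \leq \omega(\|y - y'\|),
\end{equation*}
which is exactly the claim, using that $\omega$ is nondecreasing and $\|y - y'\| \leq \|x - x'\| + \|y - y'\|$ trivially (taking $x = x'$ in the hypothesis of the lemma).

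**The main obstacle** I anticipate is not the log-sum-exp contraction estimate (that is a short convexity/Jensen argument), but rather cleanly justifying the reciprocity formula for $\psi^*$ and handling the $\nu$-a.e. ambiguity: one must argue that among the equivalence class of optimal potentials there is a canonical continuous representative given by the soft-min formula, and that \emph{every} optimal $\psi^*$ agrees $\nu$-a.e. with it, so the modulus-of-continuity statement is meaningfully attributed to the Kantorovitch potential. I would address this by taking the soft-min formula as the definition of the representative and noting the uniqueness-up-to-constant theorem quoted above guarantees all solutions coincide $\nu$-a.e. with it.
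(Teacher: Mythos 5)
Your proposal is correct and follows essentially the same route as the paper: the paper likewise writes the optimal $\psi$ as a $c,\lambda$-transform over $\calX$ of some $\varphi\in L^\infty(\calX)$ (your reciprocity/first-order condition, with $\varphi=\psi^{*c,\lambda}$) and then propagates the modulus $\omega$ through the $\exp$/integral/$-\lambda\log$ chain, which is exactly your log-sum-exp nonexpansiveness estimate. Your extra care about the $\nu$-a.e.\ representative is a reasonable refinement but not a different argument.
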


\begin{proof}
Let $\psi$ be a solution of the semi-dual problem of entropic regularized optimal transport (Eq.~(13) in the main paper). There exists $\varphi \in L^\infty(\calX)$ such that $\psi = \varphi^{c,\lambda}$. Then from the definition of $\omega$ in Lemma 1, we have for all $x\in\calX$ and $y,y'\in\calY$:
$$ \varphi(x) - c(x,y) = \varphi(x) - c(x,y) + c(x,y') - c(x,y')  \leq \omega(\|y-y'\|) + \varphi(x) - c(x,y').$$
This leads to
$$\exp\left(\frac{ \varphi(x) - c(x,y)}{\lambda} \right) \leq  \exp\left(\frac{\omega(\|y-y'\|)  + \varphi(x) - c(x,y')}{\lambda}\right).$$
Taking the  expectation for $x\sim\mu$ yields
\begin{equation*}
\resizebox{\textwidth}{!}{$
\E_{X\sim\mu}\left[\exp\left(\frac{\varphi(X) - c(X,y)}{\lambda} \right)\right]\leq  \exp\left(\frac{\omega(\|y-y'\|)}{\lambda}\right) \E_{X\sim\mu}\left[\exp\left(\frac{\varphi(X) - c(X,y')}{\lambda}\right)\right].$
}
\end{equation*}
Finally applying $-\lambda\log$, we get
$$ \varphi^{c,\lambda}(y') - \omega(\|y-y'\| \leq \varphi^{c,\lambda}(y).$$
Switching the roles of $\varphi^{c,\lambda}(y)$ and $\varphi^{c,\lambda}(y')$, we finally obtain the desired result:
$$ |\varphi^{c,\lambda}(y') -  \varphi^{c,\lambda}(y)| \leq \omega(\|y-y'\|).$$
\end{proof}

Finally, the following theorem permits to control the variations of $W_c^\lambda$ with the variations of $\theta \to g_\theta$

\begin{theorem}\label{thm:varW}
If $c\in\mathscr{C}^1(\calX \times \calY)$ and if $\calX$ and $\calY$ are compact, there exists $\kappa$ such that $c$ is $\kappa$-Lipschitz and for all $\theta_1,\theta_2$
\begin{equation}
|W_c^\lambda(\theta_1) - W_c^\lambda(\theta_2)| \leq \kappa\E_{Z\sim\zeta}\left[\|g_{\theta_1}(Z)-g_{\theta_2}(Z)\|\right].\label{eq:varW}
\end{equation}
\end{theorem}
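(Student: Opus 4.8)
The plan is to combine two ingredients: the fact that a $\mathscr{C}^1$ cost on a compact domain is globally Lipschitz, and the uniform (i.e.\ $\psi$-independent) Lipschitz regularity of the regularized $c,\lambda$-transform $\psi^{c,\lambda}$ in its spatial argument. Once these are in place, the bound \eqref{eq:varW} follows from the standard ``sub-optimal comparison'' trick used to control the increment of a function defined as a maximum over a family.

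First I would pin down the constant $\kappa$. Since $c\in\mathscr{C}^1(\calX\times\calY)$ and $\calX\times\calY$ is compact, the continuous map $(x,y)\mapsto\nabla c(x,y)$ attains its supremum, so setting $\kappa=\sup_{(x,y)}\|\nabla c(x,y)\|$ makes $c$ globally $\kappa$-Lipschitz; in particular $|c(x,y)-c(x',y)|\le\kappa\|x-x'\|$ for every fixed $y$. Next I would show that for \emph{any} $\psi\in L^\infty(\calY)$ the map $x\mapsto\psi^{c,\lambda}(x)$ is $\kappa$-Lipschitz with this same $\kappa$, repeating verbatim the argument of the preceding Proposition on Kantorovitch potentials: from $c(x,y)\le c(x',y)+\kappa\|x-x'\|$ one obtains a pointwise inequality between the integrands $\exp((\psi(y)-c(x,y))/\lambda)$ and $\exp((\psi(y)-c(x',y))/\lambda)$ up to a factor $\exp(\kappa\|x-x'\|/\lambda)$, integrates against $\nu$, and applies $-\lambda\log$ (which reverses the inequality) to conclude $|\psi^{c,\lambda}(x)-\psi^{c,\lambda}(x')|\le\kappa\|x-x'\|$. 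The decisive feature is that this constant does not depend on $\psi$.

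Then I would exploit the max structure of $W_c^\lambda$. Let $\psi_1^*$ be a Kantorovitch potential for $\theta_1$ (its existence, rather than merely a supremizing sequence, is guaranteed by the earlier existence/uniqueness theorem since $c$ is bounded on the compact domain), so that $W_c^\lambda(\theta_1)=F_c^\lambda(\theta_1,\psi_1^*)$, whereas $W_c^\lambda(\theta_2)\ge F_c^\lambda(\theta_2,\psi_1^*)$ by definition of the maximum. Subtracting, the $\theta$-independent term $\E_{Y\sim\nu}[\psi_1^*(Y)]$ cancels and only the transform term survives, yielding
$$W_c^\lambda(\theta_1)-W_c^\lambda(\theta_2)\le \E_{Z\sim\zeta}\big[(\psi_1^*)^{c,\lambda}(g_{\theta_1}(Z))-(\psi_1^*)^{c,\lambda}(g_{\theta_2}(Z))\big].$$
Inserting the pointwise $\kappa$-Lipschitz bound under the expectation bounds the right-hand side by $\kappa\,\E_{Z\sim\zeta}[\|g_{\theta_1}(Z)-g_{\theta_2}(Z)\|]$. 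Swapping the roles of $\theta_1$ and $\theta_2$ gives the same estimate for the reverse difference, hence the absolute-value bound \eqref{eq:varW}.

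As for difficulties, there is essentially no hard analytic obstacle: the expectation manipulation and the cancellation of the $\nu$-term are routine. The real engine of the proof, and the only step deserving genuine care, is the \emph{uniform-in-$\psi$} Lipschitz regularity of $\psi^{c,\lambda}$; everything else is the standard max-comparison argument. I would double-check two minor points when writing the full proof: that the existence theorem indeed supplies an attained maximizer $\psi_1^*$, and that the sign reversal in the $-\lambda\log$ step is handled correctly so that the Lipschitz constant transfers without an extra $\lambda$-dependent factor.
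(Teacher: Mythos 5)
Your proof is correct and follows essentially the same route as the paper: both arguments rest on the same two ingredients, namely the $\psi$-independent $\kappa$-Lipschitz regularity of $x\mapsto\psi^{c,\lambda}(x)$ inherited from $c$, and the sub-optimal comparison using a Kantorovitch potential for one of the two parameters. The only (cosmetic) difference is that the paper factors the estimate through the intermediate bound $|W_c^\lambda(\theta_1)-W_c^\lambda(\theta_2)|\le\kappa W_1(\mu_{\theta_1},\mu_{\theta_2})$ before invoking the coupling $(g_{\theta_1},g_{\theta_2})\sharp\zeta$, whereas you apply the pointwise Lipschitz bound directly under the expectation.
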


\begin{proof} As $c$ is a $\mathscr{C}^1$ function on a compact set, it is $\kappa$-Lipschitz with $\kappa=\sup_{\calX\times\calY} \|Dc(x,y)\|$, where $Dc(x,y)$ is the differential of $c$ at $(x,y)$.
Let us first prove that for all $\theta_1,\theta_2$ 
\begin{equation}
|W_c^\lambda(\theta_1) - W_c^\lambda(\theta_2)| \leq \kappa W_1(\mu_{\theta_1},\mu_{\theta_2}),
\end{equation}
where $W_1(\mu_{\theta_1},\mu_{\theta_2})$ is the 1-Wasserstein distance between $\mu_{\theta_1}$ and $\mu_{\theta_2}$ (i.e. the Wasserstein distance associated with the cost $(x,y) \mapsto \|x-y\|$). Taking $\psi_1$ a Kantorovitch potential for $W_c^\lambda(\theta_1)$ and $\psi_2$ a Kantorovitch potential for $W_c^\lambda(\theta_2)$, we can  write
\begin{align*}
W_c^\lambda(\theta_1) &- W_c^\lambda(\theta_2)\\
 &= \E_{\mu_{\theta_1}}\left[\psi_1^{c,\lambda}(X) \right]  + \E_{\nu}\left[\psi_1(Y) \right]  - \E_{\mu_{\theta_2}}\left[\psi_2^{c,\lambda}(X) \right] - \E_{\nu}\left[\psi_2(Y) \right]\\
&=  \E_{\mu_{\theta_1}}\left[\psi_1^{c,\lambda}(X) \right] - \E_{\mu_{\theta_2}}\left[\psi_1^{c,\lambda}(X) \right]\\
&+ \left( \E_{\mu_{\theta_2}}\left[\psi_1^{c,\lambda}(X) \right]  + \E_{\nu}\left[\psi_1(Y) \right]  - \E_{\mu_{\theta_2}}\left[\psi_2^{c,\lambda}(X) \right] - \E_{\nu}\left[\psi_2(Y) \right] \right).
\end{align*}
By optimality of $\psi_2$ for $W_c^\lambda(\theta_2)$, the sum of terms in the last parenthesis is non-positive. By switching $\theta_1$ and $\theta_2$ in the previous formula, we get
%for $W_c^\lambda(\theta_1)$ we get
\begin{equation*}
\resizebox{\textwidth}{!}{$
\E_{\mu_{\theta_1}}\left[\psi_2^{c,\lambda}(X) \right] - \E_{\mu_{\theta_2}}\left[\psi_2^{c,\lambda}(X) \right]  \leq  W_c^\lambda(\theta_1) - W_c^\lambda(\theta_2) \leq \E_{\mu_{\theta_1}}\left[\psi_1^{c,\lambda}(X) \right] - \E_{\mu_{\theta_2}}\left[\psi_1^{c,\lambda}(X) \right]$}
\end{equation*}

and thus
\begin{align}
 | W_c^\lambda(\theta_1) - W_c^\lambda(\theta_2) | &\leq \sup_{\psi\in L^\infty(\calY)} \left| \E_{\mu_{\theta_1}}\left[\psi^{c,\lambda}(X) \right] - \E_{\mu_{\theta_2}}\left[\psi^{c,\lambda}(X) \right]\right| %\\
 %& = \sup_{\varphi\in c,\lambda\text{-conv}(\calX) } \left| \E_{\mu_{\theta_1}}\left[\varphi(x) \right] - \E_{\mu_{\theta_2}}\left[\varphi(x) \right]\right|
\end{align}
From Proposition~2 all the $\psi^{c,\lambda}$ shares the modulus of continuity of the cost $c$, which is $\kappa$-Lipschitz and noted $\psi^{c,\lambda} \in {Lip}_\kappa$. Hence we can restrict the supremum to $\kappa$-Lipschitz functions and we get that
\begin{align}
| W_c^\lambda(\theta_1) - W_c^\lambda(\theta_2) |  &\leq \sup_{\varphi\in Lip_\kappa} \left| \E_{\mu_{\theta_1}}\left[\varphi(X) \right] - \E_{\mu_{\theta_2}}\left[\varphi(X) \right]\right|\\
&\leq \kappa \sup_{\varphi\in Lip_1} \left| \E_{\mu_{\theta_1}}\left[\varphi(X) \right] - \E_{\mu_{\theta_2}}\left[\varphi(X) \right]\right|
= \kappa W_1(\mu_{\theta_1},\mu_{\theta_2}).
\end{align}

Then using the transport plan $\gamma = (g_{\theta_1}\sharp\zeta, g_{\theta_2}\sharp\zeta)$, which is admissible for $W_1(\mu_{\theta_1},\mu_{\theta_2})$, we get  

$$  W_1(\mu_{\theta_1},\mu_{\theta_2}) \leq \E_{Z\sim\zeta}\left[\|g_{\theta_1}(Z)-g_{\theta_2}(Z)\|\right] ,$$
which gives the result.
\end{proof}

Hence a Lipschitz regularity on $g$ implies the same regularity on $W_c^\lambda$.
In view of establishing the differentiability of $W_c^\lambda$, we first study the differentiability  of $F_c^\lambda(\theta,\psi)$  with respect to $\theta$ in the next section.

\subsection{Regularity of $F_c^\lambda$}\label{sec:diffF} 

Before coming to our object of interest, the gradient of $W_c^\lambda(\theta)$ defined in \eqref{eq:def_Wl} with respect to the generator parameters $\theta$, we first study the regularity of $F_c^\lambda$ defined by~\eqref{eq:def_psicl}.
Compared to the un-regularized case, the entropic regularization changes the minimum in the $c$-transform \eqref{eq:ctransf} into a soft-minimum in the $c,\lambda$-transform \eqref{eq:def_psicl}. With this additional regularity property, we show the following differentiability theorem on the functional $F_c^\lambda$.
\begin{theorem} \label{thm:gradF} Let $c\in \mathscr{C}^1(\calX\times\calY)$ for $\calX$ and $\calY$ compact and $g:\Theta\times\calZ$ satisfying Hypothesis~\ref{hyp:lip}. Then for almost every~$\theta_0$, for any $\psi\in L^\infty$ and $\lambda>0$ the function $\theta \mapsto F_c^\lambda(\theta,\psi)$ in \eqref{eq:def_Fcl} is differentiable at $\theta_0$ and
\begin{equation}
\nabla_\theta F_c^\lambda(\theta_0,\psi) = \E_{Z\sim\zeta}\left[ \left(\partial_\theta g(\theta_0,Z)\right)^T \nabla \psi^{c,\lambda}(g(\theta_0,Z))\right].
\end{equation}
Moreover, if $g$ is also $\mathscr{C}^1$, then $F_c^\lambda(\cdot,\psi)$ is $\mathscr{C}^1$ on $\Theta$.
\end{theorem}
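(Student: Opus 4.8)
The plan is to reduce the whole statement to differentiation under the expectation sign, after first establishing that the inner transform $\psi^{c,\lambda}$ is itself $\mathscr{C}^1$. Since the term $\E_{Y\sim\nu}[\psi(Y)]$ in \eqref{eq:def_Fcl} is constant in $\theta$, everything reduces to $\Phi(\theta) := \E_{Z\sim\zeta}[\psi^{c,\lambda}(g(\theta,Z))]$. For the regularity of the transform, I would differentiate \eqref{eq:def_psicl} under the integral over $\calY$: as $c\in\mathscr{C}^1$ on the compact $\calX\times\calY$ and $\psi\in L^\infty$, the integrand and its $x$-gradient are bounded and continuous, so dominated convergence legitimizes the exchange and yields
$$\nabla\psi^{c,\lambda}(x) = \frac{\int_\calY \nabla_x c(x,y)\,\exp\left(\frac{\psi(y)-c(x,y)}{\lambda}\right)d\nu(y)}{\int_\calY \exp\left(\frac{\psi(y)-c(x,y)}{\lambda}\right)d\nu(y)},$$
a Gibbs-weighted average of $\nabla_x c(x,\cdot)$. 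In particular $\psi^{c,\lambda}\in\mathscr{C}^1(\calX)$ with $\|\nabla\psi^{c,\lambda}\|_\infty\leq \kappa := \sup\|\nabla_x c\|$, consistent with the $\kappa$-Lipschitz bound of Lemma~\ref{lem:modulus}.

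The technical heart is to locate the good set of $\theta_0$. Hypothesis~\ref{hyp:lip} only grants that each $g(\cdot,z)$ is locally Lipschitz, so by Rademacher it is differentiable for Lebesgue-a.e. $\theta$ — but this null set depends on $z$. To obtain a single a.e. set of $\theta_0$ valid for $\zeta$-a.e. $z$, I would apply Tonelli to the Borel set $N=\{(\theta,z): g(\cdot,z)\text{ not differentiable at }\theta\}$ (measurable because differentiability is detected by countable lim sups of difference quotients of the continuous map $g$): each slice $N_z$ is Lebesgue-null, hence $\int_\Theta \zeta(\{z:(\theta,z)\in N\})\,d\theta = \int_\calZ \mathrm{Leb}(N_z)\,d\zeta(z)=0$, so for a.e. $\theta_0$ the map $g(\cdot,z)$ is differentiable at $\theta_0$ for $\zeta$-a.e. $z$. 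At such a $\theta_0$, the chain rule gives, for a.e. $z$, that $\theta\mapsto\psi^{c,\lambda}(g(\theta,z))$ is differentiable with gradient $(\partial_\theta g(\theta_0,z))^T\nabla\psi^{c,\lambda}(g(\theta_0,z))$.

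It then remains to pass to the limit in the difference quotient. Fixing such a $\theta_0$ and a direction $v$, for $t$ small enough $\theta_0+tv$ lies in the neighborhood $\Omega$ of Hypothesis~\ref{hyp:lip}, so
$$\left|\frac{\psi^{c,\lambda}(g(\theta_0+tv,Z))-\psi^{c,\lambda}(g(\theta_0,Z))}{t}\right|\leq \kappa\,\frac{\|g(\theta_0+tv,Z)-g(\theta_0,Z)\|}{|t|}\leq \kappa\,L(\theta_0,Z)\,\|v\|,$$
which is $\zeta$-integrable since $\E_Z[L(\theta_0,Z)]=L(\theta_0)<\infty$. Dominated convergence then shows the directional derivative of $\Phi$ at $\theta_0$ equals $\langle G(\theta_0),v\rangle$ with $G(\theta_0)=\E_Z[(\partial_\theta g(\theta_0,Z))^T\nabla\psi^{c,\lambda}(g(\theta_0,Z))]$. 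As $\Phi$ is locally Lipschitz (the same estimate without the limit gives $|\Phi(\theta)-\Phi(\theta')|\leq \kappa L(\theta_0)\|\theta-\theta'\|$ near $\theta_0$) and its Gateaux derivative $v\mapsto\langle G(\theta_0),v\rangle$ is linear, the standard finite-dimensional fact that a locally Lipschitz, Gateaux-differentiable function is Fréchet differentiable upgrades this to $\nabla\Phi(\theta_0)=G(\theta_0)$, the announced formula; equivalently one runs the argument coordinatewise and matches with Rademacher's a.e. gradient.

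For the $\mathscr{C}^1$ case, if $g\in\mathscr{C}^1$ the null set of the second step is empty — $g(\cdot,z)$ is differentiable everywhere for every $z$ — so the formula holds at every $\theta$. Continuity of $\theta\mapsto G(\theta)$ follows from a further dominated-convergence argument: the integrand is continuous in $\theta$ (since $g$, $\partial_\theta g$ and $\nabla\psi^{c,\lambda}$ are continuous) and is dominated on a neighborhood of any $\theta_0$ by $\kappa L(\theta_0,Z)$, using that the operator norm of $\partial_\theta g(\theta,\cdot)$ is controlled by the local Lipschitz constant $L(\theta_0,\cdot)$. Hence $\Phi$, and therefore $F_c^\lambda(\cdot,\psi)$, is $\mathscr{C}^1$. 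I expect the main obstacle to be precisely the second step: the ``for almost every $\theta_0$'' cannot come directly from Rademacher applied to $g(\cdot,z)$ because the exceptional null set varies with $z$, and it is the Tonelli swap (together with measurability of $N$) that makes the quantifiers line up so that one null set of $\theta_0$ serves $\zeta$-almost every $z$ at once.
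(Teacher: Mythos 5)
Your proof follows the same strategy as the paper's: differentiate under the integral in \eqref{eq:def_psicl} to get $\psi^{c,\lambda}\in\mathscr{C}^1(\calX)$ with a uniformly bounded gradient, use a Fubini--Tonelli argument to convert the a.e.-$(\theta,z)$ differentiability granted by Hypothesis~\ref{hyp:lip} into a single full-measure set of $\theta_0$ that works for $\zeta$-a.e.\ $z$ (the paper asserts this implication in one line; your measurability-plus-Tonelli justification is exactly the detail it omits), and close with dominated convergence under the domination $CL(\theta_0,Z)$. The one place you diverge is the final step: you compute directional derivatives and then upgrade Gateaux to Fr\'echet via ``locally Lipschitz $+$ Gateaux $\Rightarrow$ Fr\'echet''. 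The paper instead dominates the full first-order remainder $\|p(\theta,z)-p(\theta_0,z)-\langle q(\theta_0,z),\theta-\theta_0\rangle\|/\|\theta-\theta_0\|$ by $2CL(\theta_0,z)$ and lets dominated convergence deliver Fr\'echet differentiability of the expectation directly. The paper's route is the safer one here: the upgrade you invoke requires $\Phi$ to be Lipschitz between \emph{arbitrary} pairs of nearby points, whereas Hypothesis~\ref{hyp:lip} is a centered estimate (it controls $\|g(\theta,z)-g(\theta',z)\|$ only when one endpoint is the center $\theta$, with a constant $L(\theta,z)$ that is not assumed locally bounded in $\theta$), so your claimed two-point bound $|\Phi(\theta)-\Phi(\theta')|\le\kappa L(\theta_0)\|\theta-\theta'\|$ does not literally follow from the hypothesis. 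Replacing your last step by the direct remainder bound removes this issue and changes nothing else; the $\mathscr{C}^1$ case is handled essentially identically in both proofs, with your dominated-convergence argument for the continuity of the gradient being more explicit than the paper's one-sentence conclusion.
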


\begin{proof} Let $\psi\in L^\infty$. The cost $c$ being $\mathscr{C}^1$ and $\calY$ a compact, the dominated convergence theorem gives that the $c,\lambda$-transform $\psi^{c,\lambda}$ \eqref{eq:def_psicl} is $\mathscr{C}^1$ and
\begin{equation}
\nabla \psi^{c,\lambda}(x) = \frac{\int_\calY \exp\left(\frac{\psi(y) - c(x,y)}{\lambda}\right) \nabla_x c(x,y)d\nu(y)}{\int_\calY \exp\left(\frac{\psi(y) - c(x,y)}{\lambda}\right)d\nu(y)}.
\end{equation}

From Hypothesis~\ref{hyp:lip}, $g$ is differentiable at a.e. $(\theta, z)$. This implies that for almost every $\theta$, $g$ is differentiable at $(\theta, z)$ for a.e. $z$ (and therefore admits a partial differential w.r.t. $\theta$). We set such a $\theta_0$. 
Let $p(\theta,z) := \psi^{c,\lambda}(g(\theta,z))$ so that 
\begin{equation}
F_c^\lambda(\theta,\psi) = \E_{Z\sim\zeta}[p(\theta,Z)] + \int \psi d\nu.
\end{equation}
For any $\psi$, since $\psi^{c,\lambda}$ is differentiable everywhere, there exists a neighborhood $\Omega_1$ of $\theta_0$ such that, for any $\theta\in\Omega_1$, $p(\theta,z)$ admits a partial differential at $\theta$ for almost every $z$. This partial differential writes
\begin{equation}\label{eq:diffpsic}
q(\theta,z)\hspace{-0.035cm}=\hspace{-0.03cm}\partial_\theta (\psi^{c,\lambda}\hspace{-0.02cm}(g(\theta,z)))\hspace{-0.035cm}=\hspace{-0.035cm} \left(\partial_\theta\hspace{-0.02cm} g(\theta,z)\right)^{\hspace{-0.02cm}T}\hspace{-0.04cm} \nabla \psi^{c,\lambda}(g(\theta,z)).
\end{equation}
Since $g$ satisfies Hypothesis~\ref{hyp:lip}, there also exists a neighborhood $\Omega_2$  of $\theta_0$ such that, for all $\theta\in\Omega_2$ and any $z\in\calZ$,
%\begin{equation}
$\|\partial_\theta g(\theta,z) \| \leq L(\theta_0,z).
$ %\end{equation}
Since $\calX$ is compact, we have ${C := \sup_{\calX} \|\nabla \psi^{c,\lambda}\| < \infty}$.
Thus we get $\|q(\theta,z)\| \leq C L(\theta_0,z)$ with ${\E[L(\theta_0,Z)] < \infty}$.
Besides, for $\theta \in \Omega_1\cap\Omega_2$ and any $z \in \calZ$,
\begin{align}
&\frac{\left\|p(\theta, z) - p(\theta_0, z) - \langle q(\theta_0,z) , \theta - \theta_0 \rangle\right\|}{\|\theta - \theta_0\|}\label{eq:expec}\\
\leq& \sup_{\theta\in \Omega_1\cap\Omega_2} \|\partial_\theta g(\theta,z) \|C  + \| q(\theta_0,z) \| \\
\leq& 2 L(\theta_0,z)C < \infty\label{eq:domi}
\end{align}

We can thus apply the dominated convergence theorem that yields that $\theta \mapsto F_c^\lambda(\theta,\psi)$ is differentiable at $\theta_0$ with $\nabla_\theta F_c^\lambda(\theta_0,\psi) = \E[q(\theta_0,Z)]$, which is the desired formula.

When $g$ is $\mathscr{C}^1$, this is true for any $\theta_0 \in\Theta$ and since $\psi^{c,\lambda}$ is also $\mathscr{C}^1$, we get that $F_c^{\lambda}(\cdot,\psi)$ is also $\mathscr{C}^1$.
\end{proof}

\paragraph{Discussion.} Thanks to the entropic regularization, Theorem~\ref{thm:gradF} is valid for almost every $\theta$ and for \textbf{any} $\psi$.
This is an essential point for latter purpose, as it ensures  that the result stays true for a couple $(\theta,\psi^*)$ where $\psi^*$ is a Kantorovitch potential for $\theta$.
In the un-regularized case $\lambda=0$, the proof of Theorem \ref{thm:gradF} does not hold and additional assumptions on $g$ are required. Contrary to the $c,\lambda$-transform $\psi^{c,\lambda}$,  the $c$-transform \eqref{eq:ctransf} is indeed only differentiable almost everywhere. In this case, we only have that for any~$\psi$, $\theta \mapsto F_c^0(\theta,\psi)$  is differentiable for almost every $\theta$, with the `almost every $\theta$' depending on $\psi$. Notice that this is the actual result proved in \cite{arjovsky2017wgan}. 

Another point to discuss is the $\mathscr{C}^1$ assumption on the cost $c$ in Theorem \ref{thm:gradF}. This assumption does not cover the cost $c(x,y) = \|x-y\|$ from the original WGAN framework. This issue can be treated by taking care of the points where $c(x,y)$ is not differentiable. To do so, we can assume that for almost every $\theta$, the generator  $g_\theta\sharp\zeta$ does not put mass on the atoms of the measure $\nu$. This assumption is for instance  true if $g_\theta\sharp\zeta$ is absolutely continuous w.r.t. the Lebesgue measure. Hence, in order to keep the our statements as simple as possible, we still rely on the cost assumption $c\in\mathscr{C}^1(\calX\times\calY)$ in the following.

\subsection{Differentiation of $W_c^\lambda$}\label{sec:diffOT}
We can now state our main result ensuring the differentiability of $W_c^\lambda(\theta)$ with respect to the generator parameters~$\theta$.
We first demonstrate this claim for $\mathscr{C}^1$ generators $g$.

\begin{theorem}\label{thm:gc1}Let $c$ be a  $ \mathscr{C}^1(\calX\times\calY)$ cost,  for  $\calX$ and $\calY$ compact. If the generator  $g$ satisfies Hypothesis~\ref{hyp:lip} with $g \in \mathscr{C}^1(\Theta \times \calZ, \calX)$  then the mapping
\begin{equation}
W_c^\lambda :\theta \mapsto \OTc^\lambda(g_\theta\sharp\zeta,\nu)
\end{equation}
is $\mathscr{C}^1$ and we have, for any $\theta \in \Theta$,
\begin{align}
\nabla_\theta W_c^\lambda(\theta) &= \nabla_\theta F_c^\lambda(\theta, \psi_*)\nonumber\\
&= \mathbf{E}_{Z\sim\zeta}\hspace{-0.02cm}\left[\left(\partial_\theta g(\theta,z)\right)^T\nabla\psi_*^{c,\lambda}(g(\theta,z))\hspace{-0.02cm}\right]\hspace{-0.03cm},\label{eq:gradWreg}
\end{align}
where $\psi_* \in \argmax_\psi F_c^\lambda(\psi,\theta)$, and
\begin{equation}
\nabla\psi^{c,\lambda}(x) = \dfrac{\E_{Y\sim\nu}\left[\exp\left(\frac{\psi(Y) - c(x,Y)}{\lambda}\right)\nabla_x c(x,Y)\right]}{\E_{Y\sim\nu}\left[ \exp\left(\frac{\psi(Y) - c(x,Y)}{\lambda}\right)\right]}.
\end{equation}
\end{theorem}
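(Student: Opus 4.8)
The plan is to fix an arbitrary $\theta_0\in\Theta$ and prove that $W_c^\lambda$ is differentiable at $\theta_0$ with $\nabla W_c^\lambda(\theta_0)=v:=\nabla_\theta F_c^\lambda(\theta_0,\psi_*)$, where $\psi_*$ is a Kantorovitch potential at $\theta_0$ (it exists and is unique $\nu$-a.e.\ up to an additive constant by the existence/uniqueness theorem recalled above). Since $g\in\mathscr{C}^1$, Theorem~\ref{thm:gradF} guarantees that $F_c^\lambda(\cdot,\psi)$ is $\mathscr{C}^1$ for \emph{every} fixed $\psi$, which is what makes the argument tractable. A preliminary observation is that the formula for $\nabla\psi^{c,\lambda}$ is invariant under adding a constant $C$ to $\psi$ (the factor $e^{C/\lambda}$ cancels between numerator and denominator), so $v$ depends only on the $\nu$-a.e.\ equivalence class of $\psi_*$ and is therefore well defined. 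The idea is then to sandwich the increment $W_c^\lambda(\theta)-W_c^\lambda(\theta_0)$ between two quantities that both equal $\langle v,\theta-\theta_0\rangle + o(\|\theta-\theta_0\|)$.

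For the lower bound, I would use that $\psi_*$ is feasible for every $\theta$, so $W_c^\lambda(\theta)\geq F_c^\lambda(\theta,\psi_*)$ with equality at $\theta_0$; subtracting and using the $\mathscr{C}^1$ regularity of $F_c^\lambda(\cdot,\psi_*)$ gives
\[
W_c^\lambda(\theta) - W_c^\lambda(\theta_0) - \langle v,\theta-\theta_0\rangle \;\geq\; F_c^\lambda(\theta,\psi_*) - F_c^\lambda(\theta_0,\psi_*) - \langle v,\theta-\theta_0\rangle \;=\; o(\|\theta-\theta_0\|).
\]
For the upper bound, let $\psi_\theta$ be a Kantorovitch potential at $\theta$, so that $W_c^\lambda(\theta)=F_c^\lambda(\theta,\psi_\theta)$ while $W_c^\lambda(\theta_0)\geq F_c^\lambda(\theta_0,\psi_\theta)$. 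Applying the mean value theorem to the $\mathscr{C}^1$ map $F_c^\lambda(\cdot,\psi_\theta)$ yields a point $\theta_\xi$ on the segment $[\theta_0,\theta]$ with
\[
W_c^\lambda(\theta)-W_c^\lambda(\theta_0) - \langle v,\theta-\theta_0\rangle \;\leq\; \langle \nabla_\theta F_c^\lambda(\theta_\xi,\psi_\theta)-v,\ \theta-\theta_0\rangle .
\]
It then remains only to show that $\nabla_\theta F_c^\lambda(\theta_\xi,\psi_\theta)\to v$ as $\theta\to\theta_0$, since the right-hand side is then $o(\|\theta-\theta_0\|)$ and the two bounds together give differentiability.

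This convergence is the main obstacle, and it is where I would concentrate the effort. First I would normalize each $\psi_\theta$ by an additive constant, which alters neither the objective nor the gradient; by Lemma~\ref{lem:modulus} the normalized potentials then share the modulus of continuity of $c$ and are uniformly bounded, so by Arzel\`a--Ascoli the family is relatively compact in $C(\calY)$ for the uniform norm. Any uniform limit $\bar\psi$ of a subnet obtained as $\theta\to\theta_0$ satisfies $F_c^\lambda(\theta_0,\bar\psi)=W_c^\lambda(\theta_0)$, by joint continuity of $F_c^\lambda$ together with the continuity of $W_c^\lambda$ (Theorem~\ref{thm:varW} and continuity of $\theta\mapsto g_\theta$); hence $\bar\psi$ is optimal at $\theta_0$ and, by the uniqueness theorem, equals $\psi_*$ up to an additive constant. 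As every subnet shares this limit, $\psi_\theta\to\psi_*$ uniformly. Finally, joint continuity of $(\theta,\psi)\mapsto\nabla_\theta F_c^\lambda(\theta,\psi)$ --- inherited from the smoothness of $c$ and $g$ and the log-sum-exp structure of the $c,\lambda$-transform, via dominated convergence --- yields $\nabla_\theta F_c^\lambda(\theta_\xi,\psi_\theta)\to\nabla_\theta F_c^\lambda(\theta_0,\psi_*)=v$. This proves differentiability at $\theta_0$ with $\nabla W_c^\lambda(\theta_0)=v$; since $\theta_0$ was arbitrary, $W_c^\lambda$ is differentiable on $\Theta$, and the same uniform-convergence argument shows that $\theta\mapsto\nabla W_c^\lambda(\theta)$ is continuous, so $W_c^\lambda\in\mathscr{C}^1$. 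The explicit formula \eqref{eq:gradWreg} is then just the expression of $\nabla_\theta F_c^\lambda$ provided by Theorem~\ref{thm:gradF} evaluated at the potential $\psi_*$.
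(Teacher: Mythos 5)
Your proof is correct, and it assembles the same ingredients as the paper's, just in a self-contained way. The paper proves the theorem by invoking an external envelope theorem (Proposition~\ref{thm:envoyama}), whose first hypothesis is supplied by Lemma~\ref{lem:selection} --- a continuous selection of Kantorovitch potentials obtained exactly as you do: normalize by an additive constant, use the shared modulus of continuity from Lemma~\ref{lem:modulus}, apply Arzel\`a--Ascoli, and identify every limit point via uniqueness --- and whose second hypothesis is supplied by Theorem~\ref{thm:gradF}. Your sandwich argument (lower bound from feasibility of $\psi_*$ at every $\theta$, upper bound from the mean value theorem applied to $F_c^\lambda(\cdot,\psi_\theta)$) is precisely the standard proof of that envelope theorem, so the route is the same at the level of ideas. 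What your version buys is, first, independence from the external reference, and second, an explicit treatment of the joint continuity of $(\theta,\psi)\mapsto\nabla_\theta F_c^\lambda(\theta,\psi)$ in the uniform topology on $\psi$: Proposition~\ref{thm:envoyama} genuinely requires this, but Theorem~\ref{thm:gradF} only gives continuity in $\theta$ for each fixed $\psi$, so the paper's one-line appeal leaves a small gap that your dominated-convergence remark closes. If you polish the write-up, make two points explicit: the mean value theorem needs the segment $[\theta_0,\theta]$ to lie in $\Theta$ (true locally for $\Theta$ open), and the identification of the uniform limit $\bar\psi$ as optimal at $\theta_0$ uses both the continuity of $W_c^\lambda$ (Theorem~\ref{thm:varW}) and the continuity of $\psi\mapsto F_c^\lambda(\theta_0,\psi)$ under uniform convergence, the latter again via dominated convergence in the $c,\lambda$-transform.
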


The demonstration of this theorem is based on the application of the following result.

\begin{proposition}[A.1 from \cite{oyama2018envelope}]\label{thm:envoyama} Let $w(\theta) = \max_\psi h(\psi,\theta)$ and assume that
\begin{enumerate}[align=left,leftmargin=*]
\item there exists $ \psi^* :\theta \mapsto \psi^*(\theta)$ defined on a neighborhood of $\theta_0$ s.t. $h(\psi^*(\theta),\theta) = w(\theta)$ and $\psi^*$ continuous at $\theta_0$
\item $h$ is differentiable in $\theta$ at $(\psi^*(\theta_0),\theta_0)$ and $\nabla_\theta h$ continuous in $(\psi,\theta)$ at $(\psi^*(\theta_0),\theta_0)$
\end{enumerate}
Then $w$ is differentiable at $\theta_0$ and 
\begin{equation}
\nabla w(\theta_0) = \nabla_\theta h(\psi^*(\theta_0),\theta_0).
\end{equation}
\end{proposition}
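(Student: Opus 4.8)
The plan is to establish differentiability of $w$ at $\theta_0$ by a two-sided \emph{sandwich} on the difference quotient, where I abbreviate $\psi_0 := \psi^*(\theta_0)$ and $g_0 := \nabla_\theta h(\psi_0,\theta_0)$. Concretely I would show
\[
\lim_{\theta \to \theta_0} \frac{w(\theta) - w(\theta_0) - \langle g_0, \theta - \theta_0 \rangle}{\|\theta - \theta_0\|} = 0
\]
by proving separately that the $\limsup$ is $\leq 0$ and the $\liminf$ is $\geq 0$. Throughout I note that the continuity hypothesis on $\nabla_\theta h$ at $(\psi_0,\theta_0)$ only makes sense if $\nabla_\theta h$ is defined on a full neighborhood of $(\psi_0,\theta_0)$, and I would use this implicit existence in the hard direction below.

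For the lower bound (the easy direction) I would use only that $\psi_0$ is one feasible choice in the maximization, exactly as in the weak envelope argument given earlier for $H$. Since $w(\theta) = \max_\psi h(\psi,\theta) \geq h(\psi_0,\theta)$ for every $\theta$, while $w(\theta_0) = h(\psi_0,\theta_0)$ by optimality at $\theta_0$, I get
\[
w(\theta) - w(\theta_0) \geq h(\psi_0,\theta) - h(\psi_0,\theta_0).
\]
The differentiability of $h$ in $\theta$ at $(\psi_0,\theta_0)$ (the second assumption) expands the right-hand side as $\langle g_0, \theta-\theta_0\rangle + o(\|\theta-\theta_0\|)$, so the $\liminf$ of the difference quotient is nonnegative.

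For the upper bound (the hard direction) I would instead exploit the selection $\psi^*(\theta)$ together with optimality at $\theta_0$. Since $\psi^*(\theta)$ is feasible at $\theta_0$, optimality gives $h(\psi^*(\theta),\theta_0) \leq w(\theta_0)$, whence
\[
w(\theta) - w(\theta_0) = h(\psi^*(\theta),\theta) - w(\theta_0) \leq h(\psi^*(\theta),\theta) - h(\psi^*(\theta),\theta_0).
\]
The right-hand side is a difference in the $\theta$-slot at the fixed potential $\psi^*(\theta)$, which the mean value theorem applied to $t \mapsto h(\psi^*(\theta), \theta_0 + t(\theta-\theta_0))$ rewrites as $\langle \nabla_\theta h(\psi^*(\theta), \xi_\theta), \theta-\theta_0\rangle$ for some $\xi_\theta$ on the segment $[\theta_0,\theta]$. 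As $\theta \to \theta_0$, continuity of $\psi^*$ forces $\psi^*(\theta) \to \psi_0$ and $\xi_\theta \to \theta_0$, so the joint continuity of $\nabla_\theta h$ at $(\psi_0,\theta_0)$ yields $\nabla_\theta h(\psi^*(\theta),\xi_\theta) \to g_0$. Dividing by $\|\theta-\theta_0\|$ and applying Cauchy-Schwarz then drives the $\limsup$ of the difference quotient below zero. Combining the two bounds gives differentiability of $w$ at $\theta_0$ and the formula $\nabla w(\theta_0) = g_0$.

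The main obstacle is the upper bound: the optimal potential $\psi_0$ is no longer available once $\theta$ moves, so I must track $h$ along the moving potential $\psi^*(\theta)$, and the only way to turn $h(\psi^*(\theta),\theta) - h(\psi^*(\theta),\theta_0)$ into something controllable is the mean value theorem, which requires $\nabla_\theta h$ to exist all along the segment $[\theta_0,\theta]$ for $\theta$ near $\theta_0$. This is precisely where both remaining hypotheses are consumed at once: the continuity of $\psi^*$ keeps $\psi^*(\theta)$ inside the neighborhood on which $\nabla_\theta h$ is defined and continuous, and the joint continuity of $\nabla_\theta h$ lets me pass to the limit at the unknown mean-value point. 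I would take care to verify that the segment and $\psi^*(\theta)$ indeed remain within that neighborhood for $\theta$ close enough to $\theta_0$, which is the one place where the argument could silently fail.
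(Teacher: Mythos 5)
Your proof is correct, but note that the paper itself contains no proof of this proposition: it is imported as Proposition~A.1 of \cite{oyama2018envelope}, so there is no in-paper argument to compare against. Your sandwich argument --- the easy lower bound $w(\theta)-w(\theta_0)\ge h(\psi_0,\theta)-h(\psi_0,\theta_0)$ from feasibility of the frozen maximizer $\psi_0=\psi^*(\theta_0)$, and the upper bound $w(\theta)-w(\theta_0)\le h(\psi^*(\theta),\theta)-h(\psi^*(\theta),\theta_0)$ handled by the mean value theorem along $[\theta_0,\theta]$ followed by joint continuity of $\nabla_\theta h$ at $(\psi_0,\theta_0)$ --- is precisely the standard proof of this envelope theorem and is in substance the argument of the cited reference. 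You were right to flag the one genuine subtlety: as literally stated, hypothesis~2 asserts differentiability of $h(\psi,\cdot)$ only at the single point $(\psi^*(\theta_0),\theta_0)$, whereas your mean value step needs $h(\psi^*(\theta),\cdot)$ to be differentiable at every point $\xi$ of the open segment $(\theta_0,\theta)$; the phrase ``$\nabla_\theta h$ continuous in $(\psi,\theta)$ at $(\psi^*(\theta_0),\theta_0)$'' is only meaningful if $\nabla_\theta h$ is defined on a neighborhood of that point, which is indeed the intended reading --- and it is exactly how the proposition is applied in the paper, since Theorem~\ref{thm:gradF} with $g\in\mathscr{C}^1$ makes $F_c^\lambda(\cdot,\psi)$ differentiable at \emph{every} $\theta$ for every $\psi$, so the neighborhood existence is available in the use case. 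With that reading made explicit, your containment check (continuity of $\psi^*$ keeps $\psi^*(\theta)$ near $\psi_0$, and the segment shrinks to $\theta_0$) closes the argument; the only cosmetic slip is that ``drives the $\limsup$ below zero'' should read ``at most zero,'' which does not affect correctness. Note also that your Cauchy--Schwarz step is unproblematic even when $\psi$ ranges over an infinite-dimensional space such as $L^\infty(\calY)$, since both vectors in the pairing live in the finite-dimensional parameter space $\Theta$.
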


We also need the following technical lemma.

\begin{lemma}\label{lem:selection}
There exists a continuous selection of Kantorovitch potentials, that is, a function ${\psi_* : \Theta \to \mathscr{C}(\calY)}$ which is continuous (for the uniform norm on $\mathscr{C}(\calY)$) and such that, for any $\theta$, $\psi_*(\theta) \in \argmax_\psi F_c^\lambda(\theta, \psi)$.
\end{lemma}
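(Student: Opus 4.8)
The plan is to construct $\psi_*$ as the canonical normalized Kantorovitch potential for each $\theta$, and then prove its continuity by an Arzel\`a--Ascoli compactness argument combined with the uniqueness of the optimal potential. First I would fix the additive-constant ambiguity and pin down a genuine pointwise representative. By the existence and uniqueness theorem for the dual solution, for each $\theta$ there is a Kantorovitch potential, unique $\nu$-a.e.\ up to an additive constant. As in the proof of Lemma~\ref{lem:modulus}, the optimal pair $(\varphi,\psi)$ satisfies $\varphi = \psi^{c,\lambda}$ (transform against $\nu$, as in \eqref{eq:def_psicl}) and $\psi = \varphi^{c,\lambda}$ (transform against $\mu_\theta$), so every solution admits a continuous representative on all of $\calY$ sharing the modulus of continuity $\omega$ of $c$. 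Crucially, since the transform \eqref{eq:def_psicl} is an integral against $\nu$, the function $\varphi$ depends on $\psi$ only through its $\nu$-equivalence class, hence is determined up to a single additive constant, and the continuous representative $\varphi^{c,\lambda}$ inherits this. I would then kill the constant with a normalization such as $\int_\calY \psi_*(\theta)\,d\nu = 0$, so that $\psi_*(\theta)\in\mathscr{C}(\calY)$ is well defined and unique.

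Next I would collect the compactness ingredients. By Lemma~\ref{lem:modulus} every Kantorovitch potential shares the modulus of continuity $\omega$ of $c$, so the family $\{\psi_*(\theta)\}_\theta$ is equicontinuous; together with the normalization and the compactness of $\calY$, this forces uniform boundedness (the oscillation of each $\psi_*(\theta)$ is at most $\omega(\mathrm{diam}\,\calY)$ and its $\nu$-mean is $0$, so $|\psi_*(\theta)(y)|\le\omega(\mathrm{diam}\,\calY)$ everywhere). Hence, by the Arzel\`a--Ascoli theorem, the family is relatively compact in $(\mathscr{C}(\calY),\|\cdot\|_\infty)$.

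To prove continuity at an arbitrary $\theta$, I would take $\theta_n \to \theta$ and show that every subsequence of $(\psi_*(\theta_n))_n$ admits a further subsequence converging uniformly to $\psi_*(\theta)$, which implies $\psi_*(\theta_n)\to\psi_*(\theta)$. Given a subsequence, Arzel\`a--Ascoli yields a uniformly convergent sub-subsequence $\psi_*(\theta_{n_k})\to\psi_\infty$. I would then identify $\psi_\infty$ as optimal for $\theta$ by passing to the limit in the identity $F_c^\lambda(\theta_{n_k},\psi_*(\theta_{n_k})) = W_c^\lambda(\theta_{n_k})$: the right-hand side converges to $W_c^\lambda(\theta)$ since $W_c^\lambda$ is continuous (Theorem~\ref{thm:varW} together with continuity of $\theta\mapsto g_\theta$), while the left-hand side converges to $F_c^\lambda(\theta,\psi_\infty)$ by joint continuity of $F_c^\lambda$. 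Thus $F_c^\lambda(\theta,\psi_\infty) = W_c^\lambda(\theta)$, so $\psi_\infty$ is a Kantorovitch potential; as it also satisfies $\int_\calY\psi_\infty\,d\nu=0$ (uniform limit), the uniqueness from the first paragraph gives $\psi_\infty = \psi_*(\theta)$.

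The hard part will be the joint continuity of $F_c^\lambda$ used in the last step, together with the careful handling of the a.e.-uniqueness. For joint continuity I would use that the soft-min transform $\psi\mapsto\psi^{c,\lambda}$ is nonexpansive for the uniform norm (so $\psi_n\to\psi$ uniformly gives $\psi_n^{c,\lambda}\to\psi^{c,\lambda}$ uniformly), that each $\psi^{c,\lambda}$ is continuous in $x$ on the compact $\calX$, and that $g_{\theta_n}(Z)\to g_\theta(Z)$; a dominated convergence argument on the compact sets $\calX,\calY$ then yields $\E_{Z\sim\zeta}[\psi_n^{c,\lambda}(g_{\theta_n}(Z))]\to\E_{Z\sim\zeta}[\psi^{c,\lambda}(g_\theta(Z))]$, and the linear term $\E_{Y\sim\nu}[\psi_n(Y)]\to\E_{Y\sim\nu}[\psi(Y)]$ is immediate. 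Converting the $\nu$-a.e.\ uniqueness into a genuine pointwise selection, as sketched in the first paragraph, is the other delicate point.
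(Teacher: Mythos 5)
Your proposal is correct and follows essentially the same route as the paper: normalize the potential to remove the additive constant, use the shared modulus of continuity from Lemma~\ref{lem:modulus} plus Arzel\`a--Ascoli to get relative compactness, pass to the limit in the optimality identity using continuity of $W_c^\lambda$, and conclude by uniqueness. The only differences are cosmetic (you normalize by $\int_\calY \psi\,d\nu=0$ rather than $\psi(y_0)=0$, and you phrase the argument via sub-subsequences rather than by contradiction), and you are in fact somewhat more careful than the paper about the $\nu$-a.e.\ uniqueness and the joint continuity of $F_c^\lambda$.
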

%\jr{The proof is provided in the supplementary material.}
\begin{proof}
%The demonstration  requires the uniqueness of Kantorovitch potentials (up to a constant) which is the case here with $\lambda >0$ and $c\in L^\infty$.
Let $\theta_0 \in\Theta$. 
First notice that the Kantorovitch potential $\psi$ solving \eqref{eq:def_Wl} is unique (up to a constant) since $\lambda >0$ and $c$ is bounded~\cite{genevay2019thesis}. 
We set an arbitrary $y_0 \in \calY$.
For all $\theta \in\Theta$, let us consider $\psi_\theta$ to be the Kantorovitch potential such that $\psi_\theta(y_0) = 0$. 
Since the cost $c$ is continuous on $\calX\times\calY$ compact, it is absolutely continuous. Lemma~\ref{lem:modulus} also provides that the cost has a bounded modulus of continuity $\omega$ that is shared with any Kantorovich potential $\psi_\theta$. This implies that the set $\{\psi_\theta\}_{\theta\in\Omega}$ is equicontinuous and that for all $\theta$, $\psi_\theta$ is bounded (independently of $\theta$) by $\sup_{y\in\calY}\omega(|y-y_0|)$. The Arzela-Ascoli theorem therefore implies that $\{\psi_\theta\}_{\theta\in\Omega}$ is relatively compact.

Now, by contradiction, assume that $\theta \mapsto \psi_\theta$ is not continuous at a point $\theta_0$. Then there exists $\varepsilon > 0$ and a sequence $(\theta_n)\in\Omega^\mathbf{N}$, such that $\theta_n \to \theta_0$ and
\begin{equation}
\|\psi_{\theta_n} - \psi_{\theta_0}\|_\infty > \varepsilon.
\end{equation}
Since $\{\psi_\theta\}_{\theta\in\Omega}$ is relatively compact, we can extract a subsequence $\theta_{r(n)}$ such that $\psi_{\theta_{r(n)}}$ converges uniformly towards a function $f$ in $\mathscr{C}(\calY)$. 
It follows that $\psi_{\theta_{r(n)}}^{c,\lambda}$ also converges towards $f^{c,\lambda}$. Let us denote $\mu_n =g_{\theta_n}\sharp\zeta$. This measure $\mu_n$ weakly converges towards $\mu_0$ and we can therefore write
\begin{align}
\OTc^\lambda(\mu_n,\nu) &= 
\int_\calX \psi_{\theta_{r(n)}}^{c,\lambda}(x)d \mu_n(x) + \int_\calY \psi_{\theta_{r(n)}}(y)d\nu(y)\\
&\underset{n\to\infty}{\longrightarrow} \int_\calX f^{c,\lambda}(x)d \mu_0(x) + \int_\calY f(y)d\nu(y).
\end{align}
Since $\OTc^\lambda(\mu_n,\nu) \to \OTc^\lambda(\mu_0,\nu)$, we get that $f$ is a Kantorovitch potential for $\OTc^\lambda(\mu_0,\nu)$. With $f(y_0) = \lim\psi_{\theta_{r(n)}}(y_0) = 0$ and the uniqueness up to a constant of Kantorovitch potentials we get $f =\psi_{\theta_0}$ which gives the contradiction and concludes the proof.
\end{proof}

\begin{proof}[Proof of Theorem~\ref{thm:gc1}]
The demonstration follows from the application of Proposition~\ref{thm:envoyama}. 
Lemma \ref{lem:selection} gives the first hypothesis for applying Proposition~\ref{thm:envoyama}. Theorem \ref{thm:gradF} for $g \in\mathscr{C}^1(\Theta\times\calZ)$ gives the second one and leads to the expression of $\nabla_\theta F_c^\lambda(\theta,\psi)$. 
\end{proof}

\paragraph{Case of $g$ not necessarily $\mathscr{C}^1$.}
When $g$  only satisfies Hypothesis~\ref{hyp:lip},  the gradient of $F_c^\lambda$ is not necessarily $\mathscr{C}^1$ as  seen in Section~\ref{sec:diffF}. Therefore $F_c^\lambda$ may not satisfy the second hypothesis of Proposition~\ref{thm:envoyama}, which is required to show the existence of the gradient in Theorem \ref{thm:gc1}. However we can still give a weaker result in this case, following the same sketch of proof as in \cite{arjovsky2017wgan}. We have already stated in Theorem~\ref{thm:gradF} that for any $\psi$, the gradient of $F_c^\lambda(\psi,\cdot)$ exists for almost every $\theta$. Therefore, we only need to show the existence of the gradient of $W_c^\lambda$ almost everywhere to ensure that Theorem~\ref{thm:env} holds for almost every $\theta$.
%We have already discussed in Section~\ref{sec:diffF} the issue related to the computation of the gradient of $F_c^\lambda$.\NP{transition?} To  ensure that Theorem~\ref{thm:env} holds for a given $\theta_0$, we only need to show the existence of the gradient at this point. We have shown in Theorem~\ref{thm:gradF} that the right-hand side \NP{of what?} is defined for almost every $\theta_0$.  Hence the result will stand almost everywhere when the left-hand side \NP{of what?} is also defined for almost every $\theta_0$. 

Let then demonstrate that $W_c^\lambda$ is differentiable for almost every $\theta$. Recall that $g$ satisfies Hypothesis~\ref{hyp:lip}. From relation  \eqref{eq:varW} in Theorem~\ref{thm:varW}, we get  that for all $\theta_1$, there exists a neighborhood $\Omega$ of $\theta_1$ such that for all $\theta_2\in\Omega$
\begin{align}
|W_c^\lambda(\theta_1) - W_c^\lambda(\theta_2)| &\leq \kappa\E_{Z\sim\zeta}\left[L(\theta_1, Z)\|\theta_1 - \theta_2\|\right]\nonumber\\
&\leq \kappa L(\theta_1)\|\theta_1-\theta_2\|.
\end{align}
The function $W_c^\lambda$ is thus locally Lipschitz and differentiable for almost every $\theta$ by Rademacher theorem.

\subsection{Back to the synthetic example}\label{sec:synth_solved}

We now study the synthetic example from Section~\ref{sec:failex} within the regularized framework. The regularized optimal transport between $\mu_\theta = \delta_\theta$ and $\nu = \frac{1}{2}\delta_{y_1} + \frac{1}{2}\delta_{y_2}$ writes in this case $W_c^\lambda(\theta) = \max_{\psi\in\mathbf{R}^2} F_c^\lambda(\psi,\theta)$ with $F_c^\lambda(\psi,\theta) = \frac{\psi_1+\psi_2}{2}-\lambda\log\left(\frac{1}{2}\left(\exp\left(\frac{\psi_1 - c(\theta,y_1)}{\lambda}\right) +\exp\left(\frac{\psi_2 - c(\theta,y_2)}{\lambda}\right)\right)\right) $.
The optimal $\psi^*$ maximizing $F_c^\lambda(\psi,\theta)$ satisfies $\psi^*_1 - \psi^*_2 = c(\theta,y_1) - c(\theta,y_2)$ and as in the unregularized case \eqref{eq:sol_unreg} we obtain
$$ \OTc^\lambda(\mu_\theta,\nu) = \frac{c(\theta,y_1)+ c(\theta,y_1)}{2}.$$
Hence both regularized and un-regularized problem share the same solution. In the un-regularized case, Proposition~\ref{prop:counterexample} states that the gradient of $W_c$ cannot be related to the gradient of $F_c$. On the other hand, Theorem~\ref{thm:gc1} stands for the regularized setting $W_c^\lambda$. 

Let us now highlight the numerical influence of the regularization. Starting from $\theta^0$ and for a given time step $\tau>0$, we consider the iterative algorithm
\begin{equation}\label{algo:synt}
\left\{\begin{array}{l}
\psi^k\in \argmax_\psi F_c^\lambda(\psi,\theta^k)\\
\theta^{k+1} = \theta^{k} -\tau \nabla_\theta F_c^\lambda(\psi^k,\theta^k).
\end{array}\right.
\end{equation}

Recall that the gradient of $\nabla_\theta F_c^\lambda(\psi^k,\theta^k)$ does not exist for $\lambda = 0$. In this case, we  propose to approximate $\psi^k$ with the gradient ascent procedure  proposed in~\cite{genevay2016ot} and then obtain the gradient via back-propagation. This corresponds to the alternate procedure in the WGAN framework of \cite{arjovsky2017wgan}.

We illustrate the behavior of Algorithm \eqref{algo:synt}, with $\tau=0.1$, for both un-regularized and regularized settings in Figure~\ref{fig:synthex}, where we take $y_1=(0,0)$ and $y_2=(0,1)$ and the $\mathscr{C}^1$ cost $c(x,y)=||x-y||^2$ corresponding to the $2$-Wasserstein distance. In such a setting, the optimal generator $g_\theta(z)=z-\theta$ is obtained for $\theta=\frac{y_1+y_2}{2}=(0,0.5)$. 
In the un-regularized case $\lambda = 0$, the gradient $ \nabla_\theta F_c^\lambda(\psi^k,\theta^k)$ alternates between  directions $(\theta-y_1)$ and $(\theta-y_2)$. The parameter $\theta^k$ thus oscillates around $(0,0.5)$. 
On the other hand, for the regularization parameter $\lambda=0.1$, the gradient  is well defined and $\theta^k$ converges monotonously towards $(0,0.5)$.

%As a consequence, if we iterate the following numerical procedure that follows for $\lambda = 0$ (first case) and $\lambda > 0$ second case the idea of WGAN \begin{enumerate} %ya aussi la phrase avant le itemize a refaire first case, ca veut dire un-regularized. second case regularized
%\item for a given $\theta^k$, approximate $\psi^k \in \argmax_\psi F_c^\lambda(\psi,\theta^k)$
%\item perform the gradient step $\theta^{k+1} = \theta^{k} - \nabla_\theta F_c^\lambda(\psi^k,\theta^k)$
%\end{enumerate}
%the case $\lambda = 0$ tends to produce a gradient that goes back and forth alternating between the directions $(\theta-y_1)$ and $(\theta-y_2)$. Whereas the case $\lambda > 0$ produces gradient close that $\frac{(\theta-y_1) + (\theta-y_2)}{2}$. 

\section{Learning a generative model with regularized optimal transport}\label{sec:practicaluse}

%In order to be as complete as possible we propose here a way to use in practice the proposed framework.
We finally come to practical considerations.
The  gradient  formula \eqref{eq:gradWreg} given in Theorem~\ref{thm:gc1} takes the form of an expectation. In order to minimize the regularized optimal transport cost with respect to the generator parameter $\theta$, we perform a stochastic gradient optimization with  $(\partial_\theta g(\theta,z))^T\nabla\psi_*^{c,\lambda}(g(\theta,z))$, the term inside the expectation.
%The term  inside the expectation can thus be used as a stochastic gradient to use in a optimization framework in order to minimize the regularized optimal transport cost. 
Such framework involves two limitations: (i) an optimal Kantorovitch potential $\psi_*^{c,\lambda}$ has to be approximated at each iteration; (ii) the stochastic gradient formula requires the computation of an expectation on the {\em whole} data distribution $\nu$.
The authors of~\cite{genevay2016ot} showed that in the case of a discrete target measure $\nu$, the first issue can be addressed with a stochastic gradient ascent on $\psi$. Moreover, if $\nu$ is discrete, the second point amounts to compute a mean over the dataset. This step is thus feasible, as the associated computational cost is linear with the number of data. 

When facing concrete applications, the target dataset is actually discrete. We thus propose to formulate the optimal transport in a semi-discrete way in Section~\ref{sec:semi-discrete}. We then present a numerical procedure in the spirit of the WGAN approach and some numerical examples in Section~\ref{sec:experiments}.

\subsection{Semi-discrete formulation}\label{sec:semi-discrete}

%Considering the target data $\{y_1,\ldots,y_n\}$ are in finite number, we consider the target measure $\nu = \frac{1}{n}\sum_i \delta_{y_i}$ to be discrete. 
We consider a finite dataset $\{y_1,\ldots,y_n\}$, associated to the discrete target measure $\nu = \frac{1}{n}\sum_i \delta_{y_i}$. 
In this setting,  the {\em semi-discrete} formulation of the regularized cost $W_c^\lambda$ is
\begin{equation}\label{eq:semidiscrete}
W_c^\lambda(\theta) = \max_{\psi\in \mathbf{R}^n}\E_z\left[\psi^{c,\lambda}(g_\theta(z))\right] + \frac{1}{n}\sum_i\psi_i,
\end{equation}
with $\psi^{c,\lambda}(x) = -\lambda\log\left(\frac{1}{n}\sum_i \exp\left(\frac{\psi_i - c(x,y_i)}{\lambda}\right)\right)$. 
From Theorem~\ref{thm:gc1}, the gradient of $W_c^\lambda$  writes 
\begin{equation}\label{eq:gradsd}
\nabla_\theta W_c^\lambda(\theta) = \E_z\left[q(\theta, \psi^*,z)\right],
\end{equation}
with 
\begin{equation}\label{eq:q_grad_sto}
q(\theta, \psi^*,z) = \left(\partial_\theta g(\theta,z)\right)^T \sum_i \eta_i(g_\theta(z)) \nabla_\theta c(g_\theta(z),y_i)
\end{equation}
and \begin{equation}
\eta_i(x) = \frac{\exp\left(\frac{\psi^*_i -c(x,y_i)}{\lambda}\right)}{\sum_j \exp\left(\frac{\psi^*_j -c(x,y_j)}{\lambda}\right)}
\end{equation}
This formulation has four benefits: (i) the formulation \eqref{eq:semidiscrete} is known to be concave on $\psi$ and an optimum can be approximated with a stochastic gradient ascent procedure~\cite{genevay2016ot}; (ii) the dual potential $\psi$ does not need to be encoded with a neural network; (iii) the formulation holds for any cost $c$ that is $\mathscr{C}^1$; and (iv) the formula~\eqref{eq:gradsd} provides a stochastic gradient that can be computed numerically.

%To illustrate these claims, we  present a numerical scheme and some numerical experiments in the next section.\NP{useless sentence?}

\subsection{Numerical illustrations}
\label{sec:experiments}

We propose in Algorithm~\ref{alg:algo} a numerical scheme based on a stochastic gradient descent of the regularized optimal transport cost. %In practice we use Adam optimizer \cite{kingma2017adam} 
To perform the stochastic gradient descent on $\theta$, we use the Adam optimizer \cite{kingma2017adam} of the Pytorch library with default parameters, and a learning rate $lr=10^{-4}$. %with hyperparameters $\beta_1=0.9$ and $\beta_2=0.999?$\NP{? ouai c'est les parametres par défault de pytorch, sinon on dit juste } \NP{As you whish, mais il y a un "?" à virer}, and an exponential learning rate decay.
We train the network for $N=4000$ iterations.
%Concerning the estimation of the dual potential $\psi$, we perform $N_\psi=200$ iterations of the stochastic gradient ascent algorithm of~\cite{genevay2016ot} dedicated to entropy regularized optimal transport, using batch of $K=100$ samples.
The estimation of $\psi$ is done with $N_\psi=200$ iterations of the stochastic gradient ascent algorithm of~\cite{genevay2016ot} dedicated to entropy regularized optimal transport and we use batches of $K=100$ samples.
\begin{algorithm}[ht!]\small
   %\caption{Learning Entropic Optimal Transport Generative Model} 
    \caption{Learning Generative Model with stochastic gradient of semi-discrete entropic optimal transport}
   \label{alg:algo}
\begin{algorithmic}
   \STATE {\bfseries Inputs:} regularization parameter $\lambda$, cost function $c$, sample size $K$, number of iterations $N$ and $N_{\psi}$, training set $\{y_1, \ldots y_n\}$
   \STATE {\bfseries Output:} estimated generative model parameter $\theta^N$
   \STATE {\bfseries Initialisation:} $\psi^0=0$, random initialization of $\theta^0$
   \FOR{$k=1$ {\bfseries to} $N$}
   \STATE \textbullet~ Estimate $\psi^k$ with $N_{\psi}$ iterations of the stochastic gradient ascent method in~\cite{genevay2016ot} on a batch of size $K$%mettre K en input? bien vu
   \STATE \textbullet~ Draw a batch of $K$ samples $z$ from $\zeta$
   \STATE \textbullet~ Update $\theta^{k+1}$ with a stochastic gradient descent in the direction $q(\theta^k, \psi^k, z)$ given in \eqref{eq:q_grad_sto}
   \ENDFOR
\end{algorithmic}
\end{algorithm}
%\NP{pas clair dans l'algo le step: "with optimizer step with stochastic gradient"}

We now consider the application of Algorithm~\ref{alg:algo} to the learning of a generative model on the MNIST dataset with the cost $c(x,y) = \|x-y\|^2$. The generative model we considered in our experiments consists of four fully connected layers starting from a latent variable $z\in\calZ\subset\mathbf{R}^{128}$ to the image space $\calX\subset\mathbf{R}^{784}$, with intermediate dimensions $256$,  $512$ and $1024$. In this setting, the learning of the generator with a GPU Nvidia K40m takes approximately 2 hours. We show some generated digits in Figure~\ref{fig:MNIST} for different regularization parameters $\lambda$. 
\begin{figure}[t]
\newlength{\mylen}
\setlength{\mylen}{0.08\linewidth}
\newcommand{\sidecapY}[1]{{\begin{sideways}\parbox{2.1\mylen}{\centering #1}\end{sideways}}}
\newcommand{\sidecapYY}[1]{{\begin{sideways}\parbox{\mylen}{\centering #1}\end{sideways}}}
\centering
\setlength{\tabcolsep}{1pt}
\renewcommand{\arraystretch}{1.0}
\begin{tabular}{cccccccccc}
\multirow{ 2}{*}{\tiny\sidecapY{~\hspace*{3mm}$\lambda=0.001$}} %\hfill $\lambda = 10^{-3}$
%\multirow{ 2}{*}{$\lambda =10^{-3}$}}&\includegraphics[width=\mylen]{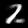} 
&\includegraphics[width=\mylen]{MNIST/reg0001_MNIST_0.png} 
&\includegraphics[width=\mylen]{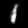}
&\includegraphics[width=\mylen]{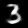} 
&\includegraphics[width=\mylen]{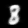}
&\includegraphics[width=\mylen]{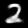}
&\includegraphics[width=\mylen]{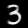}
&\includegraphics[width=\mylen]{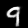}
&\includegraphics[width=\mylen]{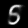}
&\includegraphics[width=\mylen]{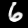}\vspace{-0.04cm}\\
&\includegraphics[width=\mylen]{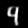} 
&\includegraphics[width=\mylen]{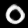}
&\includegraphics[width=\mylen]{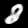} 
&\includegraphics[width=\mylen]{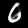}
&\includegraphics[width=\mylen]{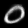}
&\includegraphics[width=\mylen]{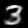}
&\includegraphics[width=\mylen]{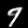}
&\includegraphics[width=\mylen]{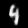}
&\includegraphics[width=\mylen]{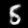}\vspace{0.04cm}
\\

\multirow{ 2}{*}{\tiny\sidecapY{\hfill$\lambda=0.01$}} % $10^{-2}
&\includegraphics[width=\mylen]{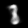} 
&\includegraphics[width=\mylen]{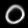} 
&\includegraphics[width=\mylen]{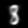}
&\includegraphics[width=\mylen]{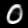} 
&\includegraphics[width=\mylen]{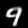} &\includegraphics[width=\mylen]{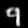}
&\includegraphics[width=\mylen]{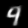}
&\includegraphics[width=\mylen]{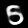}
&\includegraphics[width=\mylen]{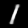}\vspace{-0.04cm}\\
&\includegraphics[width=\mylen]{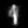} 
&\includegraphics[width=\mylen]{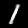} 
&\includegraphics[width=\mylen]{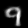}
&\includegraphics[width=\mylen]{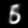} 
&\includegraphics[width=\mylen]{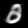} &\includegraphics[width=\mylen]{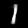}
&\includegraphics[width=\mylen]{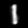}
&\includegraphics[width=\mylen]{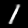}
&\includegraphics[width=\mylen]{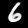}\vspace{0.04cm}
\\

{\tiny\sidecapYY{$\lambda$=\,0.1}} %\lambda = 10^{-1}
&\includegraphics[width=\mylen]{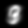} 
&\includegraphics[width=\mylen]{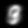} 
&\includegraphics[width=\mylen]{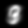}
&\includegraphics[width=\mylen]{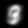} 
&\includegraphics[width=\mylen]{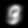} &\includegraphics[width=\mylen]{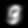}
&\includegraphics[width=\mylen]{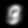}
&\includegraphics[width=\mylen]{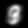}
&\includegraphics[width=\mylen]{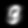}
\\
\end{tabular}
\caption{Random samples from generative models learned on the MNIST dataset with Alg.~\ref{alg:algo}, for $3$ regularization parameters $\lambda$. }
\label{fig:MNIST}
\end{figure}
This experiment shows that the proposed framework is able to learn a complex generative model, provided that the regularization parameter is sufficiently small. With high values of the regularization parameters such as $\lambda=10^{-1}$, the obtained generator realizes a compromise between all data points and concentrates to a mean image. 
This can be explained with the gradient expression~\eqref{eq:q_grad_sto} which, for $\lambda \to \infty$, pushes the generator towards a uniform average of the data points.
This also corroborates our observation that for high $\lambda$, the generator stabilizes in early iterations.

Last, observe that, as in WGAN approaches, we only use $N_\psi \ll n$ iterations for updating $\psi^k$. In practice, this number should depend on the dataset size $n$ in order to properly approximate the exact solution of the semi-discrete optimal transport problem. Another point is that $\psi$ depends on $n$ parameters. When $n$ is large, an interesting approximation could be to implicitly represent $\psi$ with a shallow neural network as in~\cite{seguy2018large}.

\section{Conclusion and discussion}
We have demonstrated that using optimal transport cost to train generative models, as popularized by the Wasserstein GAN framework \cite{arjovsky2017wgan}, raises theoretical issues when computing the gradient, even when assuming strong regularity properties on the generative model itself.
This flaw, illustrated on a toy example, can be circumvented by regularizing the optimal transport cost with entropy. 
The entropic regularization of optimal transport cost  \cite{cuturi2013sinkhorn} indeed enjoys interesting properties, such as an explicit dual formulation, fast computation and robustness to outliers~\cite{cuturi2014fast}.
As illustrated in experiments, and consistently with former works in the literature, the entropic smoothing may however yields an oversmoothed solution~\cite{cuturi2016smoothed}.
This can be cancelled by compensating the regularized cost bias, as demonstrated in \cite{feydy2019interpolating} and illustrated in~\cite{janati2020debiased}.
The resulting Sinkhorn divergence is nevertheless  challenging to compute when training generative models, as the generated distribution is continuous. 
Another interesting point is the extension of our analysis to other regularizations of optimal transport \cite{Dessein} % and optimal transport related divergence. 
%and other Wasserstein approaches, \emph{e.g.} using
or gradient penalty~\cite{gulrajani2017improved}. %,WGANLP, miyato2018spectral}.

We took advantage in this work of the discrete nature of the target distribution, defined as a collection of training samples, to propose a simple optimization algorithm based on the stochastic gradient of the semi-discrete formulation of the regularized optimal transport.
As a corollary, the proposed framework is not restricted to the $1-$Wasserstein cost function anymore, as mostly done in the literature.
Recently, some works have been considering the training of generative models with other representation of the training set, for instance using differentiable data augmentation \cite{zhao2020differentiable}. 
The question of learning generative model with regularized optimal transport between continuous distributions, as recently studied in \cite{mensch2020online}, is an interesting perspective we leave for future work.

\bibliographystyle{plain}
\bibliography{refs}

\end{document}